\newcommand{\proj}{GSAT\xspace}
\theoremstyle{definition}
\newtheorem{theorem}{Theorem}[section]
\theoremstyle{remark}
\icmltitlerunning{Interpretable and Generalizable Graph Learning via Stochastic Attention Mechanism}
\begin{document}

\twocolumn[
\icmltitle{Interpretable and Generalizable Graph Learning via \\ Stochastic Attention Mechanism}



\icmlsetsymbol{equal}{*}

\begin{icmlauthorlist}
\icmlauthor{Siqi Miao}{yyy}
\icmlauthor{Miaoyuan Liu}{xxx}
\icmlauthor{Pan Li}{yyy}
\end{icmlauthorlist}

\icmlaffiliation{yyy}{Department of Computer Science, Purdue University, West Lafayette, USA}

\icmlaffiliation{xxx}{Department of Physics and Astronomy, Purdue University, West Lafayette, USA}

\icmlcorrespondingauthor{Siqi Miao}{miao61@purdue.edu}
\icmlcorrespondingauthor{Pan Li}{panli@purdue.edu}

\icmlkeywords{Machine Learning, ICML}

\vskip 0.3in
]



\printAffiliationsAndNotice{}  

\begin{abstract}
Interpretable graph learning is in need as many scientific applications depend on learning models to collect insights from graph-structured data. Previous works mostly focused on using post-hoc approaches to interpret pre-trained models (graph neural networks in particular). They argue against inherently interpretable models because the good interpretability of these models is often at the cost of their prediction accuracy. However, those post-hoc methods often fail to provide stable interpretation and may extract features that are spuriously correlated with the task. In this work, we address these issues by proposing \emph{Graph Stochastic Attention} (\proj). Derived from the information bottleneck principle, \proj injects stochasticity to the attention weights to block the information from task-irrelevant graph components while learning stochasticity-reduced attention to select task-relevant subgraphs for interpretation. The selected subgraphs provably do not contain patterns that are spuriously correlated with the task under some assumptions.
Extensive experiments on eight datasets show that \proj outperforms the state-of-the-art methods by up to 20\%$\uparrow$ in interpretation AUC and 5\%$\uparrow$ in prediction accuracy. Our code is available at \url{https://github.com/Graph-COM/GSAT}.
\end{abstract}

\section{Introduction}
Graph learning models are widely used in science, such as physics~\cite{bapst2020unveiling} and biochemistry~\cite{jumper2021highly}. In many such disciplines, building more accurate predictive models is typically not the only goal. It is often more crucial for scientists to discover the patterns from the data that induce certain predictions~\cite{cranmer2020discovering}. For example, identifying the functional groups in a molecule that yield its certain properties may provide insights to guide further experiments~\cite{wencel2013c}.

Recently, graph neural networks (GNNs) have become almost the de fato graph learning models due to their great expressive power~\cite{kipf2016semi, xu2018powerful}. However, their expressivity is often built upon a highly non-linear entanglement of irregular graph features. So, it is often quite challenging to figure out the patterns in the data that GNNs use to make predictions.

Many works have been recently proposed to extract critical data patterns for the prediction by interpreting GNNs in post-hoc ways~\cite{ying2019gnnexplainer,yuan2020xgnn,vu2020pgm,luo2020parameterized,schlichtkrull2021interpreting,yuan2021explainability,lin2021generative,henderson2021improving}.
They work on a pre-trained model and propose different types of combinatorial search methods to detect the subgraphs of the input data that affect the model predictions the most.

In contrast to the above post-hoc methods, inherently interpretable models have been rarely investigated for graph learning tasks. There are two main concerns regarding such models. First, the prediction accuracy and inherent interpretability of a model often forms a trade-off~\cite{du2019techniques}. Practitioners may not allow sacrificing prediction accuracy for better interpretability. Second, the attention mechanism, a widely-used technique to provide inherent interpretability, often cannot provide faithful interpretation~\cite{lipton2018mythos}. The rationale of the attention mechanism is to learn weights for different features during the model training, and the rank of the learned weights can be interpreted as the importance of certain features~\cite{bahdanau2014neural, xu2015show}. However, recent extensive evaluations in NLP tasks \cite{serrano2019attention,jain2019attention,mohankumar2020towards} have shown that the attention may not weigh the features that dominate the model output more than other features. In particular, for graph learning tasks, the widely-used graph attention models~\cite{velivckovic2018graph,li2015gated} seem unable to provide any reliable interpretation of the data~\cite{ying2019gnnexplainer,yu2020graph}.

Along another line of research, invariant learning~\cite{pearl2016causal,arjovsky2019invariant,chang2020invariant,krueger21a} has been proposed to provide inherent interpretability and better generalizability. They argue that the models na\"{i}vely trained over biased data may risk capturing spurious correlations between the input environment features and the labels, and thus suffer from severe generalization issues. So, they propose to train models that align with the causal relations between the signal features and the labels. 
However, such training approaches to match causal relations typically have high computational complexity.  


\begin{figure}[t]
\begin{center}
\centerline{\includegraphics[trim={0cm 0cm 0.8cm 0cm},clip,width=1\columnwidth]{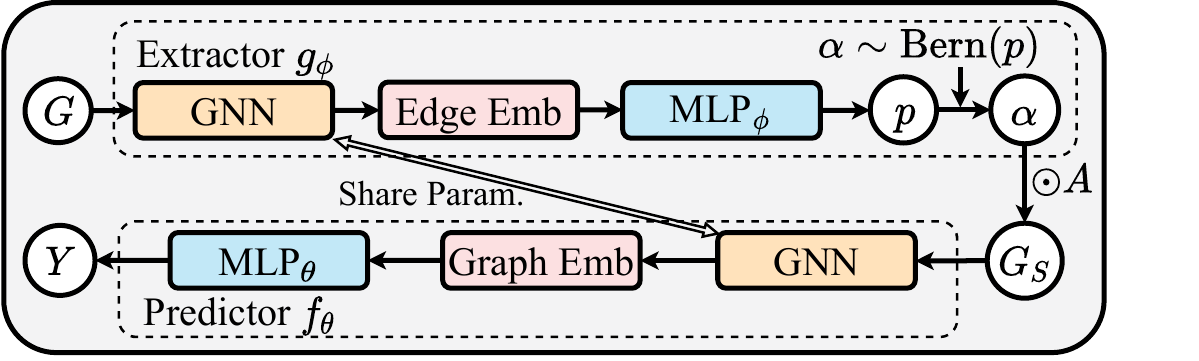}}
\vspace{-2mm}
\caption{The architecture of \proj. $g_{\phi}$ encodes the input graph $G$ and learns stochastic attention $\alpha$ (from Bernoulli distributions) that randomly drop the edges and obtain a perturbed graph $G_S$. $f_{\theta}$ encodes $G_S$ to make predictions. \proj does not constrain the size of $G_S$ but injects stochasticity to constrain information. The subgraph of $G_S$ with learnt reduced-stochasticity (edges with $p_e \rightarrow 1$) provides interpretation. \proj is a unified model by adopting just one GNN for both $g_{\phi}$ and $f_{\theta}$. \proj can be either trained from scratch or start from a pre-trained GNN predictor $f_{\theta}$.}
\label{fig:arch}
\end{center}
\vspace{-11mm}
\end{figure}

In this work, we are to address the above concerns by proposing \emph{Graph Stochastic Attention} (\proj), a novel attention mechanism to build inherently interpretable and well generalizable GNNs. The rationale of \proj roots in the notion of information bottleneck (IB)~\cite{tishby2000information,tishby2015deep}. We formulate the attention as an IB by injecting stochasticity into the attention to constrain the information flow from the input graph to the prediction~\cite{shannon1948mathematical}. Such stochasticity over the label-irrelevant graph components will be kept during the training while that over the label-relevant ones can automatically get reduced. This difference eventually provides model interpretation. By penalizing the amount of information from the input data, \proj is also expected to be more generalizable.

Our study achieves the following observations and contributions. First, the IB principle frees \proj from any potentially biased assumptions adopted in previous methods such as the size or the connectivity constraints on the detected graph patterns. Even when those assumptions are satisfied, \proj still works the best without using such assumptions, while when those assumptions are not satisfied, \proj achieves significantly better interpretation. See the sampled interpretation result visualizations in Fig.~\ref{fig:size-visual} and Fig.~\ref{fig:connect-visual}. Second, from the perspective of IB, all post-hoc interpretation methods are suboptimal. They essentially optimize a model without any information control and then perform a single-step projection to an information-controlled space, which makes the final interpretation performance sensitive to the pre-trained models. Third, by reducing the information from the input graph, \proj can provably remove spurious correlations in the training data under certain assumptions and achieve better generalization. Fourth, if a pre-trained model is provided, \proj may further improve both of its interpretation and prediction accuracy.

We evaluate \proj in terms of both interpretability and label-prediction performance. Experiments over 8 datasets show that \proj outperforms the state-of-the-art (SOTA) methods by up to $20\%$$\uparrow$  in interpretation AUC and $5\%$$\uparrow$ in prediction accuracy. Notably, \proj achieves the SOTA performance on \emph{molhiv} on OGB~\cite{hu2020ogb} among the models that do not use manually-designed expert features.

\begin{figure}[t]
\begin{center}
\centerline{\includegraphics[width=1.033\linewidth]{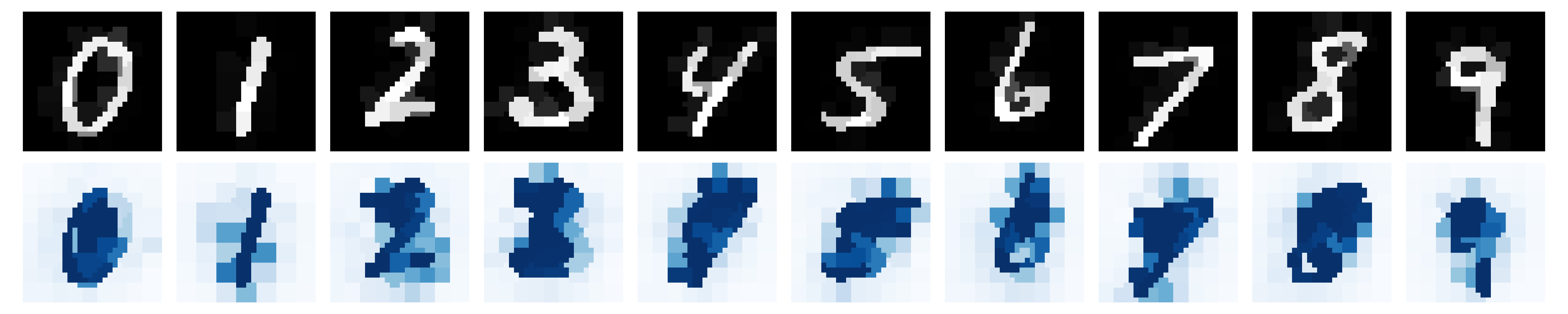}}
\vspace{-1mm}
\centerline{\includegraphics[width=1.05\linewidth]{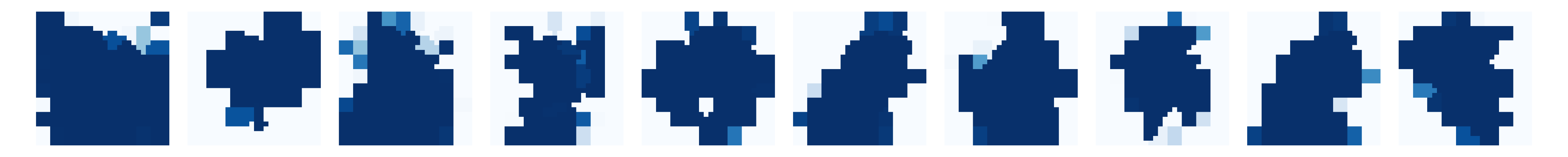}}
\end{center}
\vspace{-10mm}
\caption{Visualizing attention (normalized to $[0,1]$) of \proj (second row) v.s. masks of GraphMask~\cite{schlichtkrull2021interpreting} (third row) on MNIST-75sp. The first row shows the ground-truth. Different digit samples contain interpretable subgraphs of different sizes, while \proj is not sensitive to such varied sizes.}
\label{fig:size-visual}
\end{figure}
\begin{figure}[t]
\begin{center}
\vspace{-2mm}
\centerline{\includegraphics[trim={2cm 0cm 1cm 1.7cm},clip,width=1.0\linewidth]{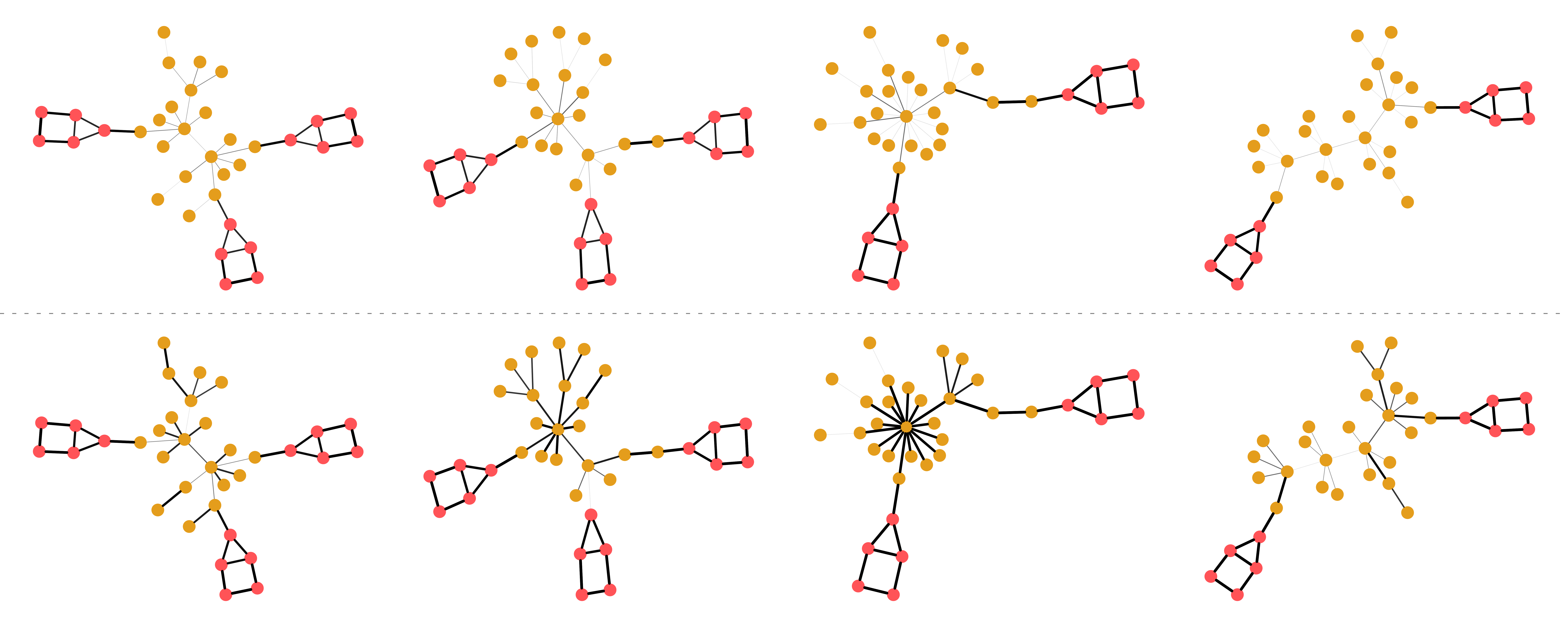}}
\end{center}
\vskip -10mm
\caption{Visualizing attention (normalized to $[0,1]$) of \proj (first row) and masks of GraphMask~\cite{schlichtkrull2021interpreting} (second row) on a motif example, where graphs with three house motifs and graphs with two house motifs represent two classes. Samples may contain disconnected interpretable subgraphs, while \proj detects them accurately. More details can be found in Appendix~\ref{appx:sup_exp}.}
\label{fig:connect-visual}
\vspace{-3mm}
\end{figure}

\section{Preliminaries}
As preliminaries, we define a few notations and concepts.

\textbf{Graph.} An attributed graph can be denoted as $G =(A,X)$ where $A$ is the adjacency matrix and $X$ includes node attributes. Let $V$ and $E$ denote the node set and the edge set, respectively. We focus on graph-level tasks: A training set of graphs with their labels $(G^{(i)}, Y^{(i)})$, $i=1,...,n$ are given, where each sample $(G^{(i)}, Y^{(i)})$ is assumed to be IID sampled from some unknown distribution $\mathbb{P}_{\mathcal{Y}\times\mathcal{G}}=\mathbb{P}_{\mathcal{Y}|\mathcal{G}}\mathbb{P}_{\mathcal{G}}$.

\textbf{Label-relevant Subgraph.} A label-relevant subgraph refers to the subgraph $G_S$ of the input graph $G$ that mostly indicates the label $Y$. For example, to determine the solubility of a molecule, the hydroxy group -OH is a positive-label-relevent subgraph, as if it exists, the molecule is often soluble to the water. Finding label-relevant subgraphs is a common goal of interpretable graph learning.

\textbf{Attention Mechanism.} 
Attention mechanism has been widely used in interpretable neural networks for NLP and CV tasks~\cite{bahdanau2014neural, xu2015show, vaswani2017attention}.
However, GNNs with attention~\cite{velivckovic2018graph} often generate low-fidelity attention weights. As it learns multiple weights for every edge, it is far from trivial to combine those weights with the irregular graph structure to perform graph label-relevant feature selection. 

There are two types of attention models: One normalizes the attention weights to sum to one~\cite{bahdanau2014neural}, while the other learns weights between $[0,1]$ without normalization~\cite{xu2015show}. As the counterparts in GNN models, GAT adopts the normalized one~\cite{velivckovic2018graph} while GGNN adopts the unnormalized one~\cite{li2015gated}. Our method belongs to the second category.

\textbf{Graph Neural Network.} GNNs are neural network models that encode graph-structured data into node representations or graph representations. They initialize each node feature representation with its attributes $h_v^{(0)}=X_v$ and then gradually update it by aggregating representations from its neighbors, i.e., $h_v^{(l+1)}\leftarrow q(h_v^{(l)}, \{h_{u}^{(l)}|u:(u,v)\in E\})$ where $q(\cdot)$ denotes a function implemented by NNs~\cite{gilmer2017neural}. Graph representations are often obtained via an aggregation (sum/mean) of node representations.

\textbf{Learning to Explain (L2X).}
L2X~\cite{chen2018learning} studies the feature selection problem in the regular feature space and proposed a mutual information (MI) maximization rule to select a fixed number of features. Specifically, let $I(a;b) \triangleq \sum_{a,b} \mathbb{P}(a,b)\log  \frac{\mathbb{P}(a,b)}{\mathbb{P}(a)\mathbb{P}(b)}$ denote the MI between two random variables $a$ and $b$. Large MI indicates certain high correlation between two random variables. Hence, with input features $X\in\mathbb{R}^F$, L2X is to search a $k$-sized set of indices $S\subseteq\{1,2,...,F\}$, where $k=|S|<F$, such that the features in the subspace indexed by $S$ (denoted by $X_S$) maximizes the mutual information with the labels $Y$, i.e., 
\begin{align} \label{eq:L2X}
    \max_{S\subseteq\{1,2,...,F\}}\; I(X_S;Y), \quad \text{s.t.} \;  |S|\leq k.
\end{align}
Our model is inspired by L2X. However, as graph features and their interpretable counterparts are in an irregular space without a fixed dimension, directly applying L2X may achieve subpar performance in graph learning tasks. We propose to use information constraint instead in Sec.~\ref{sec:obj}.

Later, we will also use the \emph{entropy} defined as $H(a)$ $\triangleq - \sum_{a} \mathbb{P}(a)\log \mathbb{P}(a)$ and the \emph{KL-divergence} defined as $\text{KL}(\mathbb{P}(a)||\mathbb{Q}(a)) \triangleq \sum_{a} \mathbb{P}(a) \log \frac{\mathbb{P}(a)}{\mathbb{Q}(a)} $~\cite{cover1999elements}.

\section{Graph Learning Interpretation via GIB}

In this section, we will first propose the GIB-based objective for interpretable graph learning and point out the issues of post-hoc GNN interpretation methods.  

\subsection{GIB-based Objective for Interpretation}  \label{sec:obj}
Finding label-relevant subgraphs in graph learning tasks has unique challenges. As for the irregularity of graph structures, graph learning models often have to deal with the input graphs of various sizes. The critical subgraph patterns may be also of different sizes and be highly irregular. Consider the example of molecular solubility again, although the functional groups for positive solubility such as -OH, -NH$_2$ are of similar sizes, those for negative solubility range from small groups (e.g., -Cl) to extremely large ones (e.g. -C$_{10}$H$_9$). And, a molecule may contain multiple functional groups scattered in the graph that determine its properties. 
Given these observations, it is not proper to just mimic the cardinality constraint used for a regular dimension space (Eq.~\eqref{eq:L2X}) and select subgraphs of certain sizes potentially with a connectivity constraint as done in~\cite{ying2019gnnexplainer}. Inspired by the graph information bottleneck (GIB) principle~\cite{wu2020graph,yu2020graph}, we propose to use information constraint instead to select label-relevant subgraphs, i.e., solving \vspace{-1mm}
\begin{align} \label{eq:GIB}
    \max_{G_S} I(G_S ; Y), \text{s.t.} \; I(G_S;G) \leq \gamma, G_S\in \mathbb{G}_{sub}(G)
\end{align}
where $\mathbb{G}_{sub}(G)$ denotes the set of the subgraphs of $G$. Note that GIB does not impose any potentially biased constraints such as the size or the connectivity of the selected subgraphs. Instead, GIB uses the information constraint $I(G_S;G) \leq \gamma$ to select $G_S$ that inherits only the most indicative information from $G$ to predict the label $Y$ by maximizing $I(G_S ; Y)$. As thus, $G_S$ provides model interpretation. 

\citet{yu2020graph} also considered using GIB to select subgraphs. However, we adopt a fundamentally different mechanism that we will provide a detailed comparison in Sec.~\ref{sec:comparison}.

\begin{figure}[t]
\vspace{-1mm}
\begin{center}
\centerline{\includegraphics[trim={0.5cm 0cm 2.5cm 0.5cm},clip,width=1.0\columnwidth]{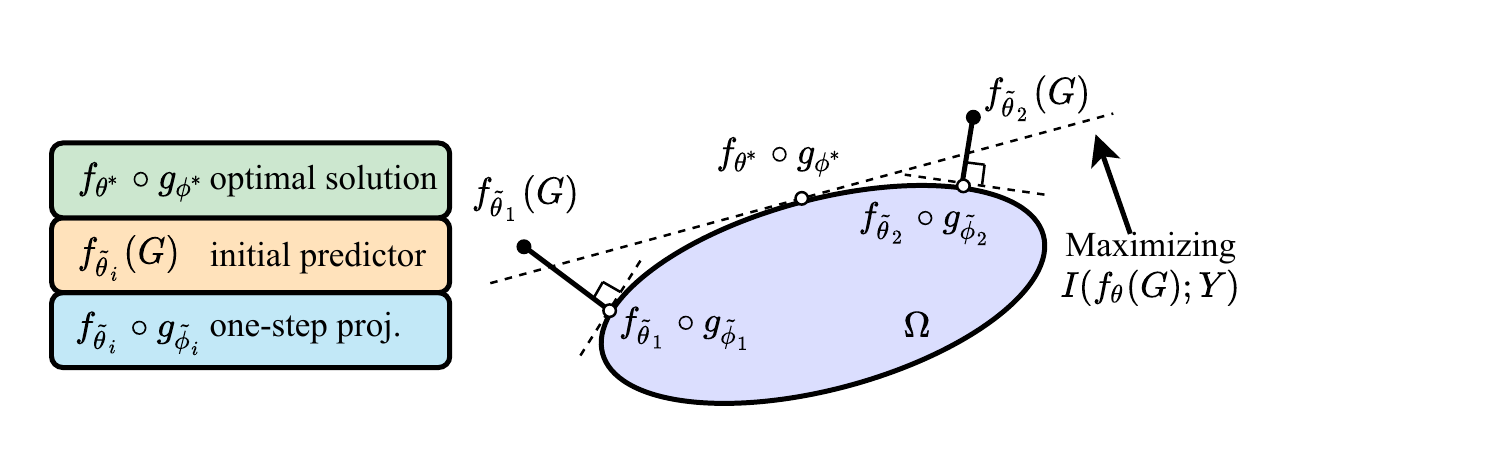}}
\end{center}
\vspace{-13mm}
\caption{Post-hoc methods just perform one-step projection to the information-constrained space, which is always suboptimal and the interpretation performance is sensitive to the pre-trained model.}
\vspace{-1mm}
\label{fig:projection}
\end{figure}

\begin{figure*}[t]
    \vspace{-1mm}
     \centering
     \begin{subfigure}[t]{0.49\linewidth} 
         \centering
         \includegraphics[width=0.49\linewidth]{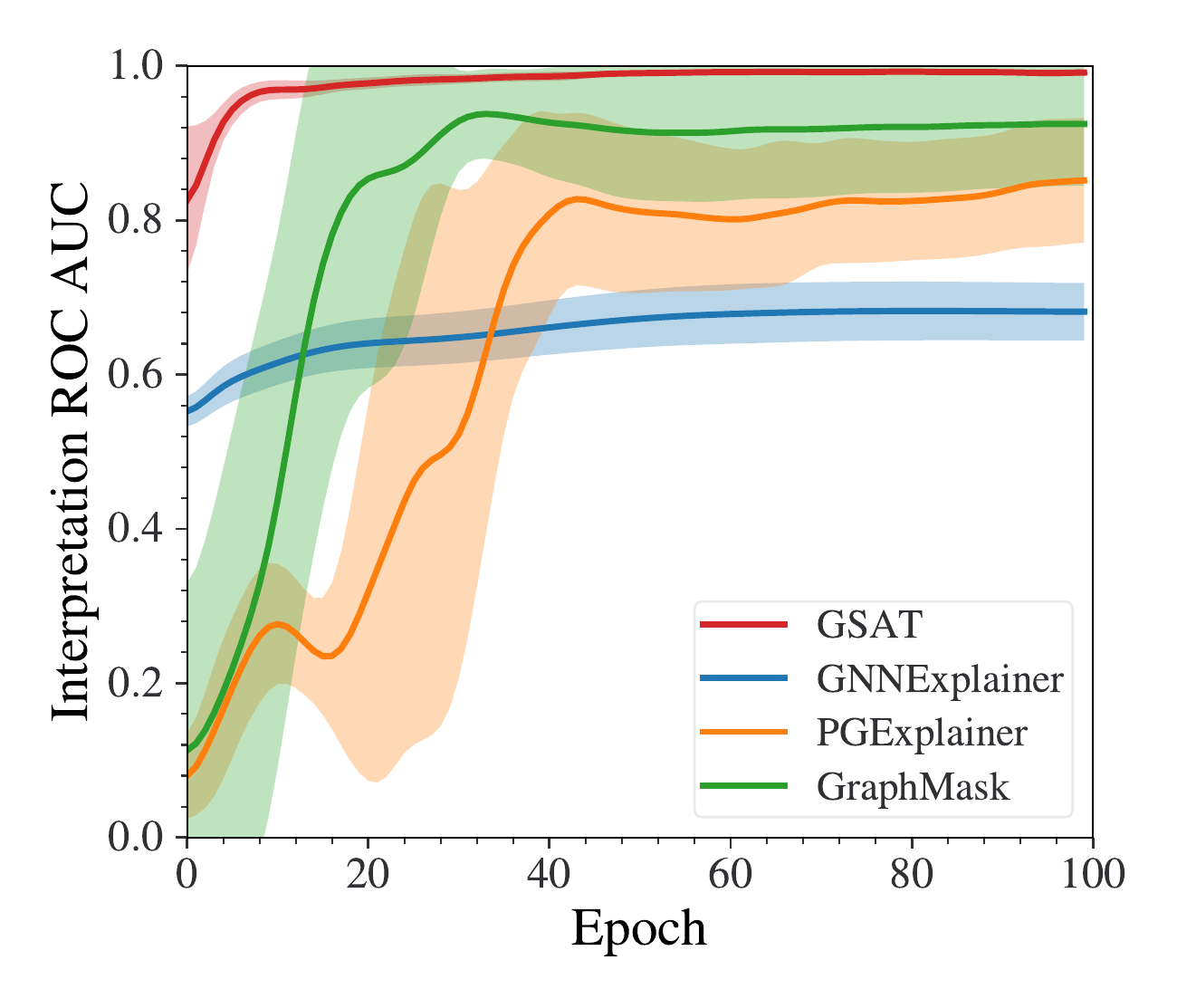}
         \includegraphics[width=0.49\linewidth]{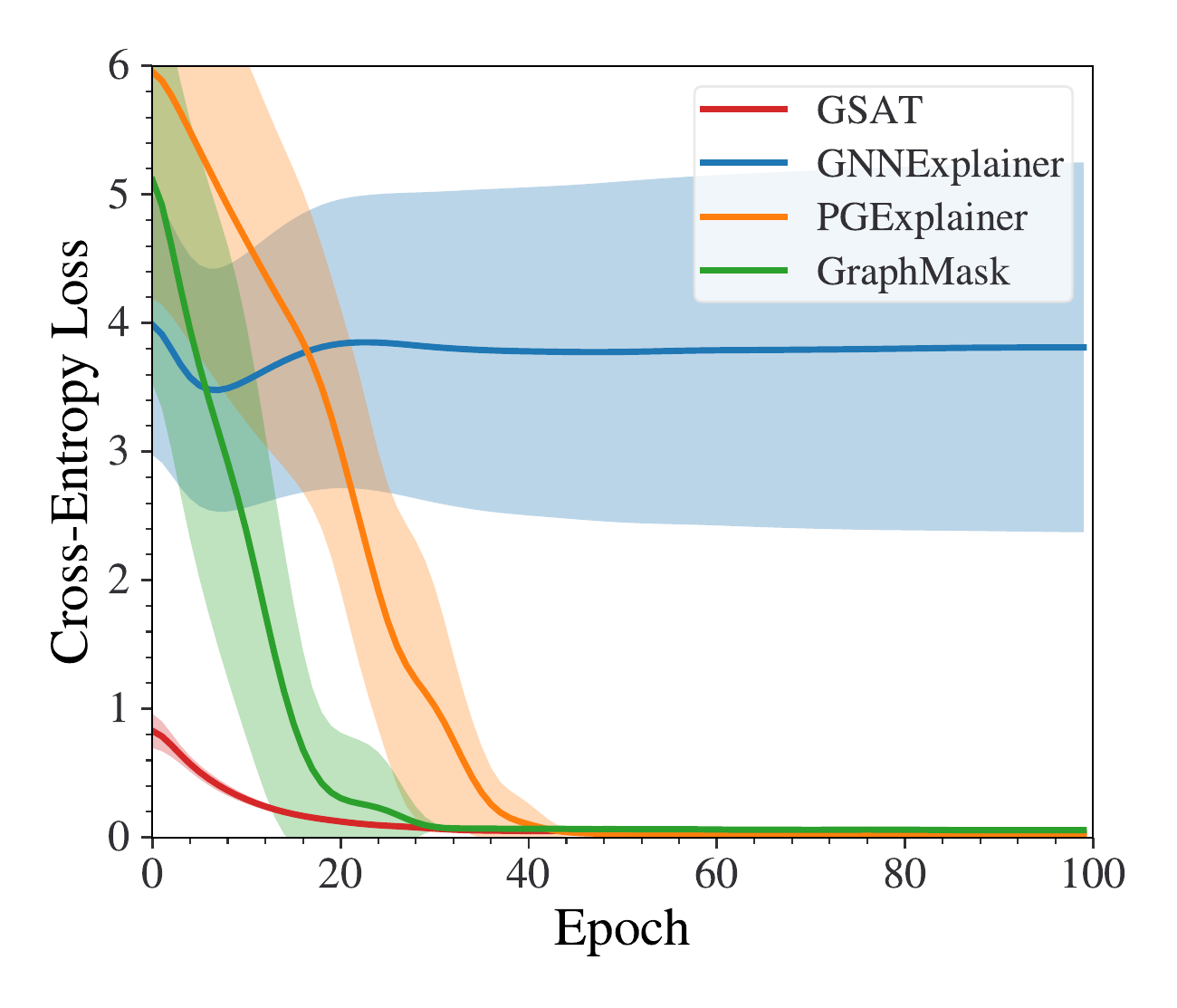}
         \vspace*{-4mm}
         \caption{Ba-2Motifs}
         \label{fig:ba2}
     \end{subfigure}
     \hfill
     \begin{subfigure}[t]{0.49\linewidth}
         \centering
         \includegraphics[width=0.49\linewidth]{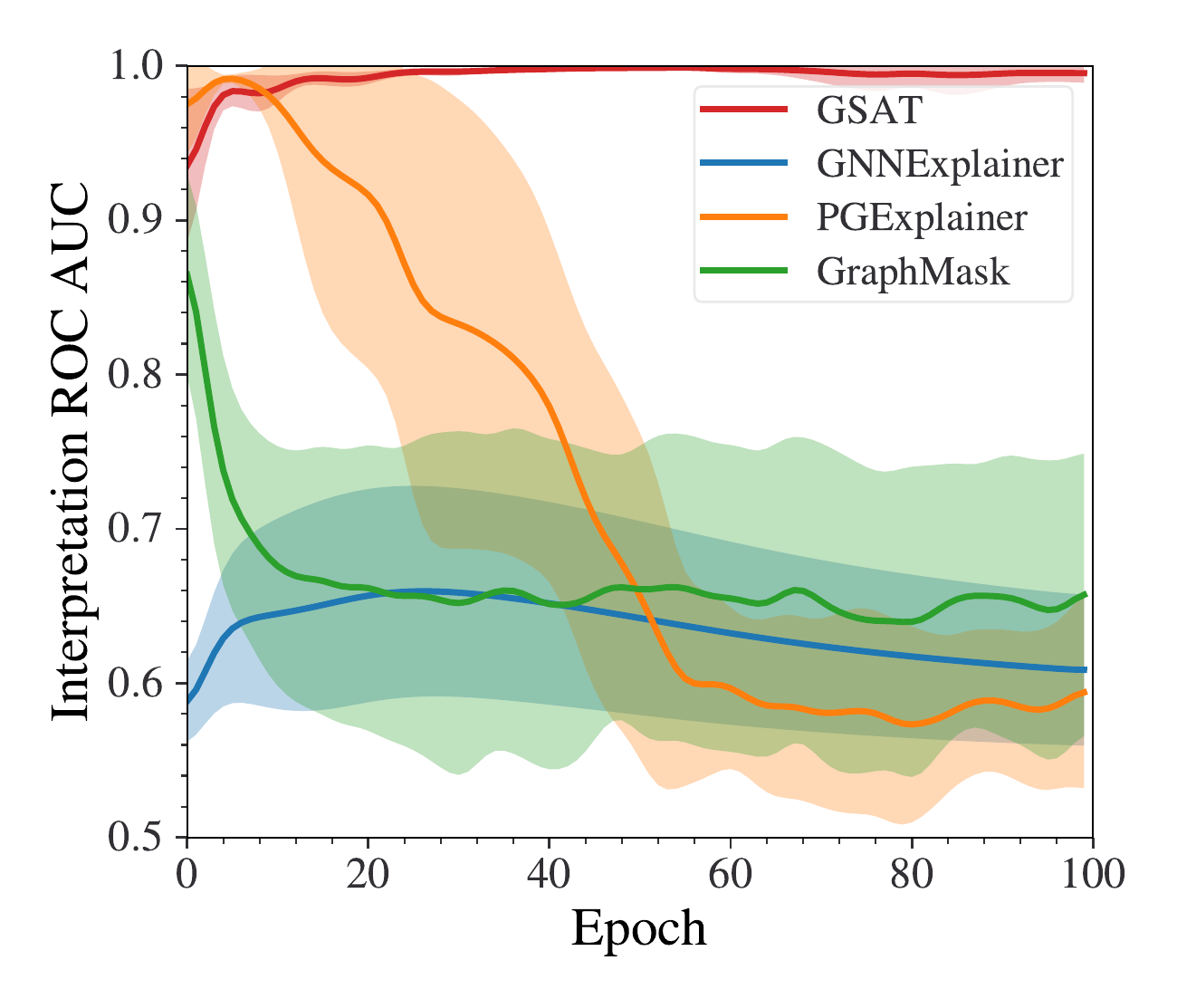}
         \includegraphics[width=0.49\linewidth]{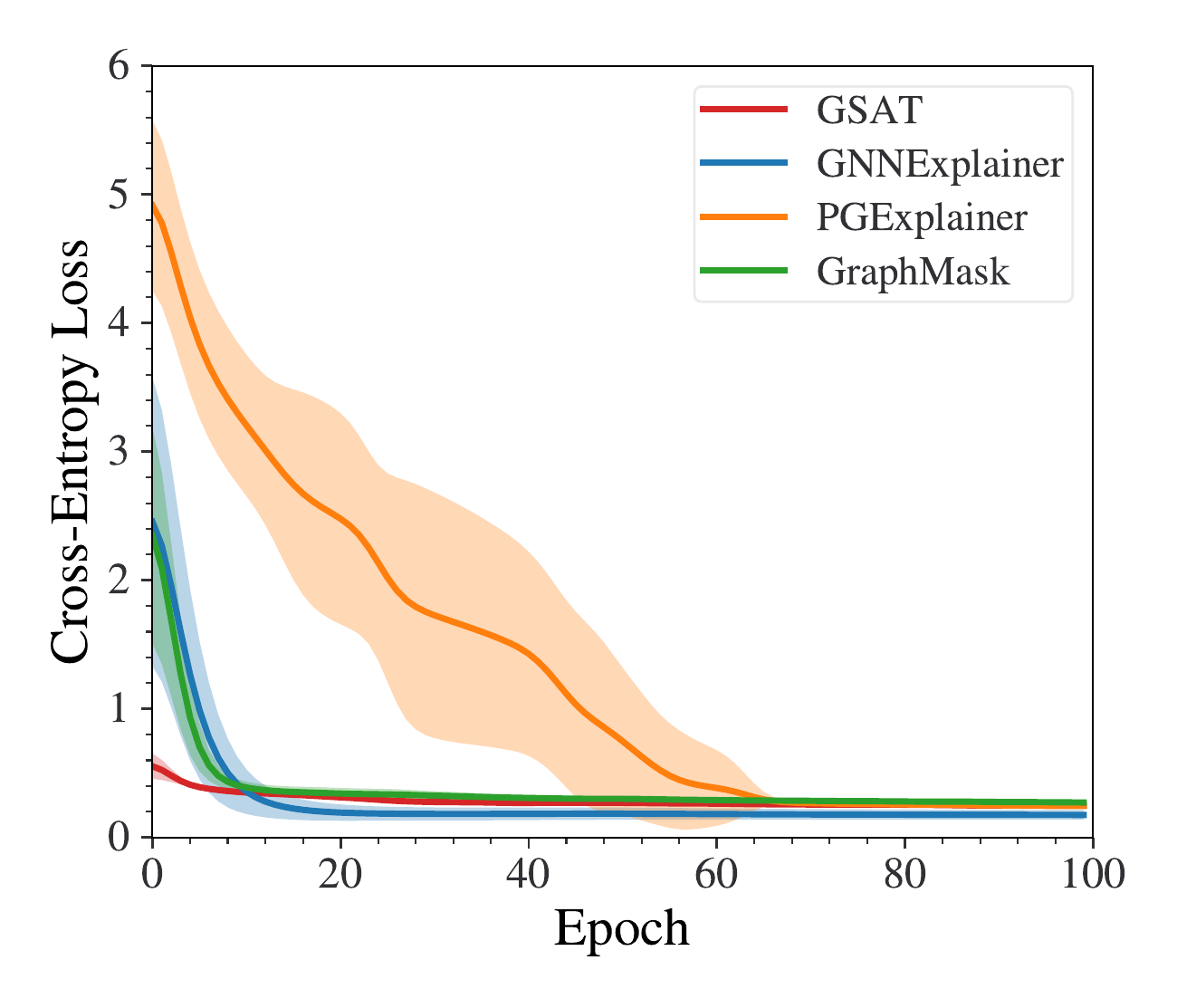}
         \vspace*{-4mm}
         \caption{Mutag}
          \label{fig:mutag}
     \end{subfigure}
     \vspace*{-2mm}
    \caption{Issues of post-hoc interpretation methods. All methods are trained with $10$ random seeds; post-hoc methods are also provided with models pre-trained with different seeds.
    Interpretation performance and the training losses of Eq.~\ref{eq:GIB} for \proj and Eq.~\ref{eq:posthoc} for others are shown. We guarantee that all the pre-trained models are well-trained in their pre-training stage (Acc. $\sim$100\% Ba-2Motif, $\sim$90\% Mutag). }
    \label{fig:post-hoc-fail}
    \vspace*{-2mm}
\end{figure*}

\subsection{Issues of Post-hoc GNN Interpretation Methods} \label{sec:post-hoc}

Almost all previous GNN interpretation methods are post-hoc, such as GNNExplainer~\cite{ying2019gnnexplainer}, PGExplainer~\cite{luo2020parameterized} and GraphMask~\cite{schlichtkrull2021interpreting}. 
Given a pre-trained predictor $f_{\theta}(\cdot): \mathcal{G}\rightarrow \mathcal{Y}$, they try to find out the subgraph $G_S$ that impacts the model predictions the most, while keeping the pre-trained model unchanged. This procedure essentially first maximizes the MI between $f_{\theta}(G)$ and $Y$ and obtains a model parameter\vspace{-1mm}
\begin{align} \label{eq:pretrain}
    \tilde{\theta} \triangleq \arg\max_{\theta} I(f_{\theta}(G);Y),
\end{align}
and then optimizes a subgraph extractor $g_{\phi}$ via
\begin{align} \label{eq:posthoc}\vspace{-1mm}
     \tilde{\phi} \triangleq\arg\max_{\phi} I(f_{\tilde{\theta}}(G_S);Y), \, \text{s.t.}\; G_S=g_{\phi}(G)\in \Omega.
\end{align}
where $\Omega$ implies a subset of the subgraphs $\mathbb{G}_{sub}(G)$ that satisfy some constraints, e.g., the cardinality constraint adopted by GNNExplainer and PGExplainer. Let us temporarily ignore the difference between different constraints and just focus on the optimization objective. The post-hoc objective Eq.~\eqref{eq:posthoc} and GIB (Eq.~\eqref{eq:GIB}) share some similar spirits. However, the post-hoc methods may not give or even approximate the optimal solution to Eq.~\eqref{eq:GIB} because $f_{\theta}\circ g_{\phi}$ is not jointly trained. From the optimization perspective, post-hoc methods just perform \emph{one-single step projection} (see Fig.~\ref{fig:projection}) from the model $f_{\tilde{\theta}}$ in an unconstrained space to $f_{\tilde{\theta}}\circ g_{\tilde{\phi}}$ in the information-constrained space $\Omega$ where the projection rule follows that the induced MI decrease $I(f_{\tilde{\theta}}(G);Y) - I(f_{\tilde{\theta}}(g_{\tilde{\phi}}(G));Y)$ gets minimized.

In practice, such a suboptimal behavior will yield two undesired consequences. First, $f_{\tilde{\theta}}$ may not fully extract the information from $G_S=g_{\phi}(G)$ to predict $Y$ during the optimization of Eq.~\eqref{eq:posthoc} because $f_{\tilde{\theta}}$ is originally trained to make $I(f_{\tilde{\theta}}(G);Y)$ approximate $I(G,Y)$ while $(G_S, Y) = (g_{\phi}(G), Y)$ follows a distribution different from $(G,Y)$. Therefore, $I(f_{\tilde{\theta}}(G_S);Y)$ may not well approximate $I(G_S;Y)$, 
and thus may mislead the optimization of $g_{\phi}$ and disable $g_{\phi}$ to select $G_S$ that indeed indicates $Y$. GNNExplainer suffers from this issue over Ba-2Motif as shown in Fig.~\ref{fig:post-hoc-fail}: The training loss, $-I(f_{\tilde{\theta}}(G_S);Y)$ keeps high and the interpretation performance is subpar. It is possible to further decrease the training loss via a more aggressive optimization of $g_\phi$. However, the models may risk overfitting the data, which yields the second issue. 


An aggressive optimization of $g_{\phi}$ may give a large \emph{empirical} MI $\hat{I}\left(f_{\tilde{\theta}}(g_{\phi}(G)); Y\right)$ (or a small training loss equivalently) by selecting features that help to distinguish labels for training but are essentially irrelevant to the labels or spuriously correlated with the labels in the population level. 
Previous works have shown that label-irrelevant features are known to be discriminative enough to even identify each graph in the training dataset let alone the labels~\cite{suresh2021adversarial}. Empirically, we indeed observe such overfitting problems of all post-hoc methods over Mutag as shown in Fig.~\ref{fig:post-hoc-fail}, especially PGExplainer and GraphMask. In the first $5$ to $10$ epochs, these two models succeed in selecting good explanations while having a large training loss. Further training successfully decreases the loss (after $10$ epochs) but degenerates the interpretation performance substantially. 
This might also be the reason why in the original literatures of these post-hoc methods, training over only a small number of epochs is suggested. However, 
in practical tasks, it is hard to have the ground truth interpretation labels to verify the results and decide a trusty stopping criterion. 

Another observation of Fig.~\ref{fig:post-hoc-fail} also matches our expectation: From the optimization perspective, post-hoc methods suffer from an initialization issue. Their interpretability can be highly sensitive to the pre-trained model $f_{\tilde{\theta}}$, as empirically demonstrated by the large variances in Fig.~\ref{fig:post-hoc-fail}. Only if the pre-trained $f_{\tilde{\theta}}$ approximates the optimal $f_{\theta^*}$, the performance can be roughly guaranteed. So, a joint training of $f_{\theta}\circ g_{\phi}$ according to the GIB principle Eq.~\eqref{eq:GIB} is typically needed. 

\section{Stochastic Attention Mechanism for GIB} \label{sec:method}

In this section, we will first give a tractable variational bound of the GIB objective (Eq.~\eqref{eq:GIB}), and then introduce our model \proj with the stochastic attention mechanism. We will further discuss how the stochastic attention mechanism improves both model interpretation and generalization. 

\subsection{A Tractable Objective for GIB}

\proj is to learn an extractor $g_{\phi}$ with parameter $\phi$ to extract $G_S\in \mathbb{G}_{\text{sub}}(G)$. $g_{\phi}$ blocks the label-irrelevant information in the data $G$ via injected stochasticity while allowing the label-relevant information kept in $G_S$ to make predictions. In \proj, $g_{\phi}(G)$ essentially gives a distribution over $\mathbb{G}_{\text{sub}}(G)$. We also denote this distribution as $\mathbb{P}_{\phi}(G_S|G)$. Later, $g_{\phi}(G)$ and $\mathbb{P}_{\phi}(G_S|G)$ are used interchangeably. 

Putting the constraint into the objective (Eq.\eqref{eq:GIB}), we obtain the optimization of $g_{\phi}$ via GIB, i.e., for some $\beta>0$,
\begin{align} \label{eq:GIB2}
    \min_{\phi} -I(G_S ; Y) + \beta I(G_S;G),\, \text{s.t.}\,\, G_S \sim g_{\phi}(G). 
\end{align}
Next, we follow~\citet{alemi2016deep, poole2019variational,wu2020graph} to derive a tractable variational upper bound of the two terms in Eq.~\eqref{eq:GIB2}. Detailed derivation is given in Appendix~\ref{appx:deriving}. For the term $I\left(G_S ; Y\right)$, we introduce a parameterized variational approximation $\mathbb{P}_{\theta}(Y|G_S)$ for $\mathbb{P}(Y|G_S)$. We obtain a lower bound:
\begin{align} \label{eq:predictor}
    I\left(G_S ; Y\right) \geq \mathbb{E}_{G_S, Y} \left[ \log {\mathbb{P}_{\theta}(Y|G_S)} \right] + H(Y).
\end{align}
Note that $\mathbb{P}_{\theta}(Y|G_S)$ essentially works as the predictor  $f_{\theta}:\mathcal{G}\rightarrow \mathcal{Y}$ with parameter $\theta$ in our model. 
For the term $I(G_S;G)$, we introduce a variational approximation $\mathbb{Q}(G_S)$ for the marginal distribution $\mathbb{P}(G_S) = \sum_{G}\mathbb{P}_{\phi}(G_S|G)\mathbb{P}_{\mathcal{G}}(G)$. And, we obtain an upper bound:
\begin{align} \label{eq:extractor}
    I\left(G_s ; G\right) \leq \mathbb{E}_{G}\left[\text{KL}(\mathbb{P}_{\phi}(G_S|G)||\mathbb{Q}(G_S)) \right]
\end{align}
Plugging in the above two inequalities, we obtain a variational upper bound of Eq.~\eqref{eq:GIB2} as the objective of \proj: 
\begin{align} \nonumber  
    &\min_{\theta,\phi}\,-\mathbb{E}\left[ \log {\mathbb{P}_{\theta}(Y|G_S)} \right]  + \beta \mathbb{E}\left[\text{KL}(\mathbb{P}_{\phi}(G_S|G)||\mathbb{Q}(G_S)) \right], 
    \\ &\quad\quad \text{s.t.}\quad  G_S \sim \mathbb{P}_{\phi}(G_S|G). \label{eq:proj}   
\end{align}
Next, we specify $\mathbb{P}_{\theta}$ (aka $f_{\theta}$), $\mathbb{P}_{\phi}$ (aka $g_{\phi}$) and $\mathbb{Q}$ in \proj.

\subsection{\proj and Stochastic Attention Mechanism}
For clarity, we introduced the predictor $f_{\theta}$ and the extractor $g_{\phi}$ separately. Actually, \proj is a unified model as $f_{\theta},\,g_{\phi}$ share the same GNN encoder except their last layers.

\textbf{Stochastic Attention via $\mathbb{P}_{\phi}$.} The extractor $g_{\phi}$ first encodes the input graph $G$ via the GNN into a set of node representations $\{h_v|v\in V\}$. For each edge $(u,v)\in E$, $g_{\phi}$ contains an MLP layer plus sigmoid that maps the concatenation $(h_u,h_v)$ into $p_{uv}\in [0,1]$. Then, for each forward pass of the training, we sample stochastic attention from Bernoulli distributions $\alpha_{uv}\sim \text{Bern}(p_{uv})$. To make sure the gradient w.r.t. $p_{uv}$ is computable, we apply the gumbel-softmax reparameterization trick~\cite{jang2016categorical}. The extracted graph $G_S$ will have an attention-selected subgraph as $A_S = \alpha \odot A$. Here $\alpha$ is the matrix with entries $\alpha_{uv}$ for $(u,v)\in E$ or zeros for the non-edge entries. $A$ is the adjacency matrix of $G$ and $\odot$ is entry-wise product. The distribution of $G_S$ given $G$ through the above procedure characterizes $\mathbb{P}_{\phi}(G_S|G)$, so $\mathbb{P}_{\phi}(G_S|G) = \prod_{u,v\in E}\mathbb{P}(\alpha_{uv}|p_{uv})$, where $p_{uv}$ is a function of $G$. This essentially makes the attention $\alpha_{uv}$ to be conditionally independent across different edges given the input graph $G$.

\textbf{Prediction via $\mathbb{P}_{\theta}$.} The predictor $f_{\theta}$ adopts the same GNN to encode the extracted graph $G_S$ to a graph representation, and finally passes such representation through an MLP layer plus softmax to model the distribution of $Y$. This procedure gives the variational distribution $\mathbb{P}_{\theta}(Y|G_S)$. 

\textbf{Marginal Distribution Control via $\mathbb{Q}$.} 
The bound Eq.\eqref{eq:extractor} is always true for any $\mathbb{Q}(G_S)$. We define $\mathbb{Q}(G_S)$ as follows. For every graph $G\sim \mathbb{P}_{\mathcal{G}}$ and every two directed node pair $(u,v)$ in $G$, we sample $\alpha_{uv}'\sim \text{Bern}(r)$ where $r\in[0,1]$ is a hyperparameter. 
We remove all edges in $G$ and add all edges $(u,v)$ if $\alpha_{uv}'=1$.
Suppose the obtained graph is $G_S$. This procedure defines the distribution $\mathbb{Q}(G_S) = \sum_{G}\mathbb{P}(\alpha'|G)\mathbb{P}_{\mathcal{G}}(G)$. As $\alpha'$ is independent from the graph $G$ given its size $n$, $\mathbb{Q}(G_S) = \sum_{n}\mathbb{P}(\alpha'|n)\mathbb{P}_{\mathcal{G}}(G=n) = \mathbb{P}(n)\prod_{u,v=1}^n \mathbb{P}(\alpha_{uv}')$. The probability of an $n$-sized  graph $\mathbb{P}(n)$ is a constant and thus will not affect the model. Note that our choice of $\mathbb{Q}(G_S)$ shares the similar spirit of using standard Gaussian as the latent distribution with variational auto-encoders~\cite{kingma2013auto}. 

Using the above $\mathbb{P}_{\theta}$, the first term in Eq.\eqref{eq:proj} reduces to a standard cross entropy loss. Using $\mathbb{P}_{\phi}$ and $\mathbb{Q}$, the KL-divergence term becomes, for every $G\sim \mathbb{P}_{\mathcal{G}}$, $n$ as the size of $G$,
\begin{align}\label{eq:reg}
    &\text{KL}(\mathbb{P}_{\phi}(G_S|G)||\mathbb{Q}(G_S))  = \\
    &\sum_{(u,v)\in E} p_{uv} \log \frac{p_{uv}}{r} + \left(1-p_{uv}\right) \log \frac{1-p_{uv}}{1-r} + c(n,r). \nonumber
\end{align}
where $c(n,r)$ is a constant without any trainable parameters. 

\subsection{The Interpretation Mechanism of \proj}

The interpretability of \proj essentially comes from the information control: \proj decreases the information from the input graphs by injecting stochasticity via attention into $G_S$. In the training, the regularization term Eq.\eqref{eq:reg} would try to assign large stochasticity for all edges, yet driven by the classification loss $\min -I(G_S;Y)$ (equivalent to cross-entropy loss), \proj can learn to reduce such stochasticity of the attention on the task-relevant subgraphs. So, it is not the entire $G_S$ but the part of $G_S$ with the stochasticity-reduced attention, aka $p_{uv} \rightarrow 1$, that provide model interpretation. Therefore, when \proj provides interpretation, in practice, one can rank all edges according to $p_{uv}$ and use those top ranked ones (given a certain budget if needed) as the detected subgraph for interpretation. 
The contribution of injecting stochasticity to the performance is so significant as shown in experiments (Table~\ref{table:ab-beta-noise-gin}), so is the contribution of our regularization term (Eq.~\eqref{eq:reg}) when we compare it with the sparsity-driven $\ell_1$-norm (Fig.~\ref{fig:reg-compare}).

\proj is substantially different from previous methods, as we do not use any sparsity constraints such as $\ell_1$-norm~\cite{ying2019gnnexplainer,luo2020parameterized}, $\ell_0$-norm~\cite{schlichtkrull2021interpreting} or $\ell_2$-regression to $\{0,1\}$~\cite{yu2020graph} to select size-constrained (or connectivity-constrained) subgraphs. We actually observe that setting $r$ away from 0 in the marginal regularization (Eq.~\eqref{eq:reg}), i.e., pushing $G_S$ away from being sparse often provides more robust interpretation. 
This matches our intuition that GIB by definition does not make any assumptions on the selected subgraphs but just constrains the information from the original graphs. Our experiments show that \proj outperform baselines significantly without leveraging those assumptions in the optimization even if the label-relevant subgraphs satisfy these assumptions. If the label-relevant subgraphs are indeed disconnected or vary in sizes, the improvement of \proj is expected to be even more.

\subsection{Further Comparison on Interpretation Mechanism} \label{sec:comparison}

PGExplainer and GraphMask also have stochasticity in their models~\cite{luo2020parameterized,schlichtkrull2021interpreting}. However, their main goal is to enable a gradient-based search over a discrete subgraph-selection space rather than control the information as \proj does. Hence, they did not in principle derive the information regularization as ours (Eq.~\eqref{eq:reg}) but adopt sparsity constraints to extract a small subgraph $G_S$ directly used for interpretation. 

IB-subgraph~\cite{yu2020graph} considers using GIB as the objective but does not inject any stochasticity to generate $G_S$, so its selected subgraph $G_S$ is a deterministic function of $G$. Specifically, IB-subgraph samples batches of graphs $G$ to estimate $I(G_S;G)$ and optimize a deterministic function $G_S=g_{\phi}(G)$ to minimize such MI estimation. 
In this case $I(G_S;G) (= H(G_S) - H(G_S|G))$ reduces to the entropy $H(G_S)$, 
which tends to give a small-sized $G_S$, because the space of small graphs is small and has a lower upper bound of the entropy.
By contrast, $G_{S} \sim g_{\phi}(G)$ is random in \proj, and \proj implements GIB mainly by increasing $H(G_S|G)$ via injecting stochasticity.

\subsection{Guaranteed Spurious Correlation Removal}
\label{sec:better-generalization}
\proj can remove spurious correlations in the training data and has guaranteed interpretability. We may prove that if there exists a correspondence between a subgraph pattern $G_S^*$ and the label $Y$, 
the pattern $G_S^*$ is the optimal solution of the GIB objective (Eq.~\eqref{eq:GIB}).

\begin{figure}[t]
\begin{center}
\centerline{\includegraphics[trim={0.6cm 0.2cm 0.1cm 0cm},clip,width=0.8\columnwidth]{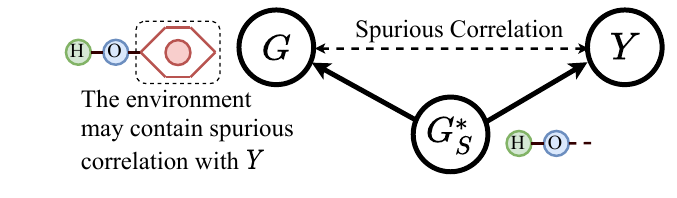}}
\end{center}
\vspace{-10mm}
\caption{$G_S^*$ determines $Y$. However, the environment features in $G\backslash G_S^*$ may contain spurious (backdoor) correlation with $Y$.}
\label{fig:sp_correlation}
\vspace{-1mm}
\end{figure}

\begin{theorem}
\label{thm:iboptim}
Suppose each $G$ contains a subgraph $G_S^*$ such that $Y$ is determined by $G_S^*$ in the sense that $Y=f(G_S^*)+\epsilon$ for some deterministic invertible function $f$ with randomness $\epsilon$ that is independent from $G$. Then, for any $\beta\in [0,1]$,  $G_S=G_S^*$ maximizes the GIB $I\left(G_S ; Y\right) -  \beta I \left(G_S;G\right) $, where $G_S\in \mathbb{G}_{\text{sub}}(G)$.
\end{theorem}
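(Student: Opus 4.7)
The assumption $Y = f(G_S^*) + \epsilon$ with $f$ invertible and $\epsilon \perp G$ is tailor-made for a two-sided data-processing argument. The plan is to sandwich both mutual-information terms of the GIB using $I(G_S; G_S^*)$, and then exploit $\beta \in [0,1]$ to convert that sandwich into a single upper bound that $G_S = G_S^*$ itself attains.

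First I would extract the Markov structure. Since $\epsilon$ is independent of $G$ and $G_S^*$ is a (deterministic) subgraph of $G$, $Y$ is conditionally independent of $G$ given $G_S^*$. Because any stochastic extractor produces $G_S$ from $G$ using randomness disjoint from $\epsilon$, this lifts to $Y \perp G_S \mid G_S^*$, yielding the chain $Y - G_S^* - G - G_S$ regardless of whether $G_S$ is the deterministic output of $g_\phi$ or a stochastic one.

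Next I would apply DPI twice along this chain. Along $Y - G_S^* - G_S$ it gives $I(G_S; Y) \le I(G_S; G_S^*)$; along $G_S - G - G_S^*$ (using that $G_S^*$ is a deterministic function of $G$) it gives $I(G_S; G) \ge I(G_S; G_S^*)$. Because $\beta \in [0,1]$, these combine into
\[
I(G_S; Y) - \beta I(G_S; G) \;\le\; (1-\beta)\, I(G_S; G_S^*) \;\le\; (1-\beta)\, H(G_S^*).
\]
It is critical here that $\beta \le 1$, since otherwise the sandwich collapses and a trivial $G_S = \varnothing$ could dominate the objective.

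Finally I would verify that $G_S = G_S^*$ saturates this bound. Clearly $I(G_S^*; G) = H(G_S^*)$ since $G_S^*$ is a deterministic function of $G$, and invertibility of $f$ gives $I(G_S^*; Y) = H(G_S^*)$ because $G_S^*$ is recoverable from $Y$; plugging in yields exactly $(1-\beta)\, H(G_S^*)$. The main obstacle is handling the noise $\epsilon$: the clean step $I(G_S^*; Y) = H(G_S^*)$ requires $H(G_S^* \mid Y) = 0$, which is immediate only in the noiseless reading of the assumption. If additive noise leaves a residual, I would refine using $I(G_S; Y) = I(G_S; G_S^*) - I(G_S; G_S^* \mid Y)$ and $I(G_S; G) = I(G_S; G_S^*) + I(G_S; G \mid G_S^*)$, rewriting the objective as $(1-\beta)\, I(G_S; G_S^*) - I(G_S; G_S^* \mid Y) - \beta\, I(G_S; G \mid G_S^*)$, and then arguing that at $G_S = G_S^*$ the last term vanishes while the first is maximized, so the unavoidable $H(G_S^* \mid Y)$ penalty at $G_S^*$ is matched by a corresponding deficit at every competing $G_S$.
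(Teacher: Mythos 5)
Your route is genuinely different from the paper's. The paper never invokes $G_S^*$ until the very last step; it uses the chain rule to rewrite the objective as the identity
\[
I(G_S;Y) - \beta I(G_S;G) = (1-\beta)I(Y;G) - (1-\beta)I(G;Y\mid G_S) - \beta I(G;G_S\mid Y),
\]
observes that the first term is a constant independent of $G_S$ while the other two are nonnegative when $\beta\in[0,1]$, and then argues that both residual terms vanish at $G_S = G_S^*$. You instead set up the Markov chain $Y - G_S^* - G - G_S$ and apply data-processing twice to sandwich the objective by $(1-\beta)I(G_S;G_S^*)\le (1-\beta)H(G_S^*)$. The two arguments certify the same optimality, but yours makes $G_S^*$ and its entropy the organizing quantities and makes the role of $\beta\le 1$ explicit at the exact moment the two DPI inequalities are combined, whereas in the paper that requirement hides in the sign of the coefficient $(1-\beta)$ in front of $I(G;Y\mid G_S)$.

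The point you flag as delicate is in fact a genuine gap, and it is present in both proofs. Your sandwich gives the universal bound $(1-\beta)H(G_S^*)$, but at $G_S=G_S^*$ the objective actually evaluates to $(1-\beta)H(G_S^*) - H(G_S^*\mid Y)$, so the bound is attained only when $H(G_S^*\mid Y)=0$. The paper's step (b), ``$G_S^* = f^{-1}(Y-\epsilon)$, $\epsilon\perp G$, therefore $I(G;G_S^*\mid Y)=0$,'' hides the same assumption behind a collider fallacy: conditioning on $Y$ induces dependence between $\epsilon$ and $G$, and since $G_S^*$ is a deterministic function of $G$ one has $I(G;G_S^*\mid Y)=H(G_S^*\mid Y)$, which is positive under a literal additive-noise reading.

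Your proposed refinement does not close this gap, and cannot: the claim that the $H(G_S^*\mid Y)$ penalty at $G_S^*$ ``is matched by a corresponding deficit at every competing $G_S$'' is false in general. Take $G=G_S^*$ uniform on $\{0,1\}$, $\epsilon$ uniform on $\{0,1\}$ independent of $G$, $f=\mathrm{id}$, and $Y=G_S^*+\epsilon$. Then $H(G_S^*\mid Y)=1/2$, so the objective at $G_S=G_S^*$ is $\tfrac12-\beta$, while at the empty subgraph it is $0$; for $\beta>\tfrac12$ the empty graph strictly beats $G_S^*$ and the theorem's conclusion fails. So the resolution is not a cleverer rearrangement but a strengthening of the hypothesis: read ``$Y$ determined by $G_S^*$'' and ``$f$ invertible'' as jointly requiring $H(G_S^*\mid Y)=0$, i.e.\ no residual label ambiguity. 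Under that reading your first, clean argument (and the paper's) go through exactly, and your second decomposition becomes unnecessary.
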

\begin{proof}
Consider the following derivation:
\begin{align*}
           & I(G_S; Y)  -  \beta I(G_S; G) \\
        =  & I(Y;G,G_S) - I(G;Y|G_S) - \beta I(G_S; G) \\
        =  & I(Y;G,G_S) - (1-\beta) I(G;Y|G_S) -  \beta I(G;G_S,Y) \\
        =  & I(Y;G) - (1-\beta) I(G;Y|G_S) - \beta I(G;G_S,Y) \\
        = & (1-\beta) I(Y;G) - (1-\beta) I(G;Y|G_S) - \beta I(G;G_S|Y),
\end{align*}
where the third equality is because $G_S\in \mathbb{G}_{sub}(G)$, then $(G_S, G)$ holds no more information than $G$. 

If $\beta\in[0,1]$, $G_S$ that maximizes $I(G_S, Y)  -  \beta I(G_S; G)$ can also minimize $(1-\beta) I(G;Y|G_S) + \beta I(G;G_S|Y)$. As $I(G;Y|G_S)\geq 0$, $I(G;G_S|Y)\geq 0$, the lower bound of $(1-\beta) I(G;Y|G_S) + \beta I(G;G_S|Y)$ is 0. 

$G_S^*$ is the subgraph that makes $(1-\beta) I(G;Y|G_S^*) + \beta I(G;G_S^*|Y)= 0$. This is because (a) $Y=f(G_S^*)+\epsilon$ where $\epsilon$ is independent of $G$ so $I(G;Y|G_S^*) = 0$ and (b) $G_S^* = f^{-1}(Y-\epsilon)$ where $\epsilon$ is independent of $G$ so $I(G;G_S^*|Y) = 0$. Therefore, $G_S=G_S^*$ maximizes GIB $I\left(G_S ; Y\right) -  \beta I \left(G_S;G\right)$, where $G_S\in \mathbb{G}_{\text{sub}}(G)$.
\end{proof}

Although $G_S^*$ determines $Y$, in the training dataset the data $G$ and $Y$ may have some spurious correlation caused by the environment~\cite{pearl2016causal,arjovsky2019invariant,chang2020invariant,krueger21a}. That is, $G\backslash G_S^*$ may have some correlation with the label, but this correlation is spurious and is not the true reason that determines its label  (illustrated in Fig.~\ref{fig:sp_correlation}). A model trained over $G$ to predict $Y$ via just MI maximization may capture such spurious correlation. If such correlation is changed during the test phase, the model suffers from performance decay. 

However, Theorem~\ref{thm:iboptim} indicates that \proj by optimizing the GIB objective has the capability to address the above issue by only extracting $G_S^*$, which removes the spurious correlation and also provides guaranteed interpretability.



\vspace{-1mm}
\subsection{Fine-tuning and Interpreting a Pre-trained Model} \label{sec:finetune}

\proj can also fine-tune and interpret a pre-trained GNN. Given a GNN $f_{\tilde{\theta}}$ pre-trained by $\max _{\theta} I(f_{\theta}(G) ; Y)$, \proj can fine-tune it via $\max _{\theta,\phi} I(f_{\theta}(G_S) ; Y) - \beta I(G_S;G)$, $G_S\sim g_{\phi}(G)$ by initializing the GNN used in $g_{\phi}$ and $f_{\theta}$ as the one in the pre-trained model $f_{\tilde{\theta}}$.

We observe that this framework almost never hurts the original prediction performance (and sometimes even boosts it). Moreover, this framework often achieves better interpretation results compared with training the GNN from scratch.

\vspace{-1mm}
\section{Other Related Works}

Besides the models \cite{ying2019gnnexplainer,luo2020parameterized,schlichtkrull2021interpreting,yu2020graph} that we have compared with in detail in Sec.~\ref{sec:post-hoc} and Sec.~\ref{sec:comparison}, we review some other interpretation methods here.

Most previous works on GNN interpretation are post-hoc~\cite{ribeiro2016model}. Some works strongly rely on the connectivity assumption and only search over the space of connected subgraphs for interpretation. They adopt either reinforcement learning~\cite{yuan2020xgnn} or Monte Carlo tree search~\cite{yuan2021explainability}. Other methods including PGM-Explainer~\cite{vu2020pgm} leveraging graphical models, Gem~\cite{lin2021generative} checking Granger causality and Graphlime~\cite{huang2020graphlime} using HSIC Lasso are only applied to node-level task interpretation. Some works check the gradients w.r.t. the input features to find important features~\cite{pope2019explainability, baldassarre2019explainability}. 


Much fewer works have considered intrinsic interpretation. Recently, \citet{anonymous2022discovering} has proposed DIR to make the model avoid overfitting spurious correlations and only capture invariant rationales to provide interpretability. However, DIR needs to iteratively break graphs into subgraphs and assemble subgraphs into graphs during the model training, which is far more complicated than \proj. 
\vspace{-1mm}
\section{Experiments} \label{experiments}

\begin{table*}[t]
\vspace{-2mm}
\caption{Interpretation Performance (AUC). The \underline{underlined} results highlight the best baselines. The \textbf{bold} font and \textbf{bold}$^{\dagger}$ font highlight when \proj outperform the means of the best baselines based on the mean of \proj and the mean-2*std of \proj, respectively. }
\vspace{-0.1cm}
\begin{center}
\begin{small}
\begin{sc}
\begin{tabular}{lcccccc}
\toprule
  & \multirow{2}{*}{Ba-2motifs} & \multirow{2}{*}{Mutag} & \multirow{2}{*}{MNIST-75sp} & \multicolumn{3}{c}{Spurious-motif}                                        \\
  &                    &                    &                    & $b=0.5$                  & $b=0.7$                  & $b=0.9$                  \\
\midrule
GNNExplainer & $67.35\pm3.29$ & $61.98\pm5.45$ & $59.01\pm2.04$ & $62.62\pm1.35$ & $62.25\pm3.61$ & $58.86\pm1.93$ \\
PGExplainer  & $84.59\pm9.09$ & $60.91\pm17.10$ & $69.34\pm4.32$ & $69.54\pm5.64$ & $72.33\pm9.18$ & $\underline{72.34}\pm2.91$ \\
GraphMask    & $\underline{92.54}\pm8.07$ & $62.23\pm9.01$ & $\underline{73.10}\pm6.41$ & $72.06\pm5.58$ & $73.06\pm4.91$ & $66.68\pm6.96$ \\
IB-Subgraph          & $86.06\pm28.37$ & $\underline{91.04}\pm6.59$ & $51.20\pm5.12$ & $57.29\pm14.35$ & $62.89\pm15.59$ & $47.29\pm13.39$ \\
DIR          & $82.78\pm10.97$ & $64.44\pm28.81$ & $32.35\pm9.39$ & $\underline{78.15}\pm1.32$ & $\underline{77.68}\pm1.22$ & $49.08\pm3.66$ \\
\midrule
GIN+\proj    & $\mathbf{98.74}^\dagger\pm0.55$ & $\mathbf{99.60}^\dagger\pm0.51$ & $\mathbf{83.36}^\dagger\pm1.02$ & $\mathbf{78.45}\pm3.12$ & $74.07\pm5.28$ & $71.97\pm4.41$ \\
GIN+$\text{\proj}^*$ & $\mathbf{97.43}^\dagger\pm1.77$ & $\mathbf{97.75}^\dagger\pm0.92$ & $\mathbf{83.70}^\dagger\pm1.46$ & $\mathbf{85.55}^\dagger\pm2.57$ & $\mathbf{85.56}^\dagger\pm1.93$ & $\mathbf{83.59}^\dagger\pm2.56$ \\
\midrule
PNA+\proj    & $\mathbf{93.77}\pm3.90$ & $\mathbf{99.07}^\dagger\pm0.50$ & $\mathbf{84.68}^\dagger\pm1.06$ & $\mathbf{83.34}^\dagger\pm2.17$ & $\mathbf{86.94}^\dagger\pm4.05$ & $\mathbf{88.66}^\dagger\pm2.44$ \\
PNA+$\text{\proj}^*$ & $89.04\pm4.92$ & $\mathbf{96.22}^\dagger\pm2.08$ & $\mathbf{88.54}^\dagger\pm0.72$ & $\mathbf{90.55}^\dagger\pm1.48$ & $\mathbf{89.79}^\dagger\pm1.91$ & $\mathbf{89.54}^\dagger\pm1.78$ \\
\bottomrule
\label{table:Interpretation}
\end{tabular}
\end{sc}
\end{small}
\end{center}
\vspace{-9mm}
\end{table*}

\begin{table*}[t]
\caption{Prediction Performance (Acc.). The \textbf{bold} font highlights the inherently interpretable methods that significantly outperform the corresponding backbone model, GIN or PNA, when the mean-1*std of a method $>$ the mean of its corresponding backbone model.}
\vspace{-0.1cm}
\begin{center}
\begin{small}
\begin{sc}
\begin{tabular}{lcccccc}
\toprule
  & \multirow{2}{*}{MolHiv (AUC)} & \multirow{2}{*}{Graph-SST2} & \multirow{2}{*}{MNIST-75sp} & \multicolumn{3}{c}{Spurious-motif}                                        \\
  &                    &                    &                    & $b=0.5$                  & $b=0.7$                  & $b=0.9$                  \\
\midrule
GIN & $76.69\pm1.25$ & $82.73\pm0.77$ & $95.74\pm0.36$ & $39.87\pm1.30$ & $39.04\pm1.62$ & $38.57\pm2.31$ \\
IB-subgraph & ${76.43}\pm2.65$ & $82.99\pm0.67$ & $93.10\pm1.32$ & $\mathbf{54.36}\pm7.09$ & $\mathbf{48.51}\pm5.76$ & $\mathbf{46.19}\pm5.63$ \\
DIR & $76.34\pm1.01$ & $82.32\pm0.85$ & $88.51\pm2.57$ & $\mathbf{45.49}\pm3.81$ & $41.13\pm2.62$ & $37.61\pm2.02$ \\
GIN+\proj & $76.47\pm1.53$ & $82.95\pm0.58$ & $\mathbf{96.24}\pm0.17$ & $\mathbf{52.74}\pm4.08$ & $\mathbf{49.12}\pm3.29$ & $\mathbf{44.22}\pm5.57 $\\
GIN+$\text{\proj}^*$ & $76.16\pm1.39$ & $82.57\pm0.71$ & $\mathbf{96.21}\pm0.14$ & $\mathbf{46.62}\pm2.95$ & $41.26\pm3.01$ & $39.74\pm2.20 $\\
\midrule
PNA (no scalars) & $78.91\pm1.04$ & $79.87\pm1.02$ & $87.20\pm5.61$ & $68.15\pm2.39$ & $66.35\pm3.34$ & $61.40\pm3.56 $ \\
PNA+\proj & $\mathbf{80.24}\pm0.73$ & $\mathbf{80.92}\pm0.66$ & $\mathbf{93.96}\pm0.92$ & $68.74\pm2.24$ & $64.38\pm3.20$ & $57.01\pm2.95 $\\
PNA+$\text{\proj}^*$ & $\mathbf{80.67}\pm0.95$ & $\mathbf{82.81}\pm0.56$ & $\mathbf{92.38}\pm1.44$ & $\mathbf{69.72}\pm1.93$ & $\mathbf{67.31}\pm1.86$ & $61.49\pm3.46 $\\ 
\bottomrule
\label{table:Generalization}
\end{tabular}
\end{sc}
\end{small}
\end{center}
\vskip -6mm
\end{table*}

We evaluate our method for both interpretability and prediction performance. We will compare our method with both state-of-the-art (SOTA) post-hoc interpretation methods and inherently interpretable models. 
We will also compare with several invariant learning methods to demonstrate the ability of \proj to remove spurious correlations. We briefly introduce datasets, baselines and experiment settings here, and more details can be found in Appendix~\ref{appx:setting}.
\vspace{-2mm}
\subsection{Datasets}

\textbf{Mutag}~\cite{debnath1991structure} is a molecular property prediction dataset. Following~\cite{luo2020parameterized}, -NO$_2$ and -NH$_2$ in mutagen graphs are labeled as ground-truth explanations.

\textbf{BA-2Motifs}~\cite{luo2020parameterized} is a synthetic dataset with binary graph labels. House motifs and cycle motifs give class labels and thus are regarded as ground-truth explanations for the two classes respectively.

\textbf{Spurious-Motif}~\cite{anonymous2022discovering} is a synthetic dataset with three graph classes. Each class contains a particular motif that can be regarded as the ground-truth explanation. Some spurious correlation between the rest graph components (other than the motifs) and the labels also exists in the training data. 
The degree of such correlation is controlled by $b$, and we include datasets with $b=0.5$, $0.7$ and $0.9$.

\textbf{MNIST-75sp} ~\cite{knyazev2019understanding} is an image classification dataset, where each image in MNIST is converted to a superpixel graph. Nodes with nonzero pixel values provide ground-truth explanations. Note that the subgraphs that provide explanations are of different sizes in this dataset.
    
\textbf{Graph-SST2} ~\cite{socher2013recursive, yuan2020explainability} is a sentiment analysis dataset, where each text sequence in SST2 is converted to a graph. Following the splits in~\cite{anonymous2022discovering}, this dataset contains degree shifts and no ground-truth explanation labels. So, we only evaluate prediction performance and provide interpretation visualizations.

\textbf{OGBG-Molhiv} ~\cite{wu2018moleculenet, hu2020ogb} is a molecular property prediction datasets. We also evaluate \proj on molbace, molbbbp, molclintox, moltox21 and molsider datasets from OGBG. As there are no ground truth explanation labels for these datasets, we only evaluate the prediction performance of \proj.  

\subsection{Baselines and Setup}
\textbf{Interpretability Baselines.} We compare interpretability with post-hoc methods GNNExplainer ~\cite{ying2019gnnexplainer}, PGExplainer ~\cite{luo2020parameterized}, GraphMask ~\cite{schlichtkrull2021interpreting}, and inherently interpretable models DIR \cite{anonymous2022discovering} and IB-subgraph ~\cite{yu2020graph}.

\textbf{Prediction Baselines.} We compare prediction performance with the  backbone models GIN ~\cite{xu2018powerful} and PNA ~\cite{corso2020principal}, and inherently interpretable models DIR \cite{anonymous2022discovering} and IB-subgraph ~\cite{yu2020graph}.

\textbf{Invariant Learning Baselines.} We compare the ability to remove spurious correlations with invariant learning methods IRM~\cite{arjovsky2019invariant}, V-REx~\cite{krueger21a} and DIR \cite{anonymous2022discovering}. Baseline results yielded by empirical risk minimization (ERM) are also included.

\textbf{Metrics.} For interpretation evaluation, we report explanation ROC AUC following ~\cite{ying2019gnnexplainer, luo2020parameterized}. For prediction performance, we report classification ROC AUC for all OGBG datasets and report accuracy for all other datasets. All the results are averaged over 10 times tests with different random seeds. For the post-hoc methods, we do not cherry pick a pre-trained model. Instead, in each test, we interpret a model pre-trained independently that achieves the best validation performance.

\textbf{Setup.} Since we focus on graph classification tasks, GIN ~\cite{xu2018powerful} is used as the backbone model for both baselines and \proj. We also apply PNA~\cite{corso2020principal} to further test the wide applicability of \proj, for which we adopt the no-scalars version since the scalars used in PNA are essentially a type of attention, which may conflict with our method. GIN+GSAT denotes using GIN as the base GNN encoder of GSAT, and PNA+GSAT means replacing the GNN encoder with PNA.
In addition, we apply \proj to fine-tune and interpret pre-trained models as described in Sec.~\ref{sec:finetune}, which is highlighted as $\text{\proj}^*$. 
In all the experiments, we use $r=0.7$ in Eq.~\eqref{eq:reg} by default or otherwise specified. Our studies have shown that \proj is generally robust when $r\in[0.5, 0.9]$ (see Fig.~\ref{fig:reg-compare} later).

\subsection{Result Comparison and Analysis} \label{sec:results}

\textbf{Interpretability Results.} As shown in Table \ref{table:Interpretation}, our methods significantly outperform the baselines by 9\%$\uparrow$ on average and up to 20\%$\uparrow$. If we just compare among inherently interpretable models, the boost is even more significant. Moreover, \proj also provides much stabler interpretation than the baselines as for the much smaller variance. \text{\proj}$^*$ via fine-tuning a pre-trained model can often further boost the interpretation performance. Also, when the more expressive model PNA is used as the backbone, we find the posthoc methods are likely to suffer from the overfitting issue as explained in Sec.~\ref{sec:post-hoc}. However, \proj does not suffer from that and can yield even better interpretation results. 
Over Ba-2Motifs and Mutag, GNNExplainer and PGExplainer work worse than what reported in \cite{luo2020parameterized} as we do not cherry pick the pre-trained model. However, \proj still significantly outperforms their reported performance in the Appendix \ref{appx:sup_exp}. 
We also provide visualizations of the subgraphs discovered by \proj in Appendix \ref{appx:visz}.

\textbf{Prediction Results.} As explained in Sec.~\ref{sec:better-generalization}, being trained via the GIB principle, \proj is more generalizable and thus may  achieve even better prediction performance.
As shown in Table \ref{table:Generalization}, GIN+\proj significantly outperforms the backbone GIN over the Spurious-Motif datasets, where spurious correlation  exists in the training data. For other datasets, GIN+\proj can achieve comparable results, which matches our claim that \proj provides interpretation without hurting the prediction. IB-subgraph, trained via the GIB principle, also achieves good prediction performance though its interpretability is poor (Table~\ref{table:Interpretation}).
When PNA is used, \proj improves it by about $1-5\%$ on the datasets in the first three columns. Notably, $\text{\proj}^*$ achieves the SOTA performance on \emph{molhiv} among all models that do not incorporate expert knowledge according to the \href{https://ogb.stanford.edu/docs/leader_graphprop/#ogbg-molhiv}{leaderboard}. Unexpectedly, PNA achieves very good performance on Spurious-Motif and \text{\proj}$^*$ just slightly improves it. Our results on the other $5$ molecular datasets from OGBG are showed in Table \ref{table:5mol}, where \proj and $\text{\proj}^*$ mostly outperform PNA.

\textbf{Invariant Learning Results.} We note that DIR achieves a bit lower prediction performance in Table \ref{table:Generalization} than what reported in \cite{anonymous2022discovering} even after we extensively tune its parameters, which is probably due to the different backbone models used. Hence, we also compare with DIR by using their backbone model. And we include several invariant learning baselines reported in DIR to further demonstrate the ability of \proj to remove spurious correlations. Results are shown in Table \ref{table:dir-acc}. \proj significantly outperforms all invariant learning methods on spurious correlation removal, even without utilizing causality analysis, which further validates our claims in Sec.~\ref{sec:better-generalization}.
A comparison of interpretability of these models is shown in Table \ref{table:dir-prec5} in the appendix.

\begin{table}[t]
\vspace{-2mm}
\caption{Generalization ROC AUC on other OGBG-Mol datasets. The \textbf{bold} font highlights when \proj outperforms PNA.}
\vspace{-0.1cm}
\begin{center}
\resizebox{\columnwidth}{!}{%
\begin{sc}
\begin{tabular}{lccccc}
\toprule
  & molbace           & molbbbp           & molclintox        & moltox21          & molsider          \\
\midrule
PNA & $73.52\pm3.02$ & $67.21\pm1.34$ & $86.72\pm2.33$ & $75.08\pm0.64$ & $56.51\pm1.90$ \\
\proj            & $\mathbf{77.41}\pm2.42$ & $\mathbf{69.17}\pm1.12$ & $\mathbf{87.80}\pm2.36$ & $74.96\pm0.66$ & $\mathbf{57.58}\pm1.23$ \\
$\text{\proj}^*$        & $73.61\pm1.59$ & $66.30\pm0.79$ & $\mathbf{89.26}\pm1.66$ & $\mathbf{75.71}\pm0.48$ & $\mathbf{59.19}\pm1.03$ \\
\bottomrule
\label{table:5mol}
\end{tabular}
\end{sc}
}
\end{center}
\vskip -10mm
\end{table}

\begin{table}[t]
\caption{Direct comparison (Acc.) with invariant learning methods on the ability to remove spurious correlations, by applying the backbone model used in~\cite{anonymous2022discovering}.}
\vspace{-4mm}
\begin{center}
\resizebox{\columnwidth}{!}{%
\begin{sc}
\begin{tabular}{lccc}
\toprule
Spurious-motif  & $b = 0.5$          & $b=0.7$           & $b=0.9$           \\
\midrule
ERM          & $39.69\pm1.73           $ & $38.93\pm1.74           $ & $33.61\pm1.02$            \\
V-REx        & $39.43\pm2.69           $ & $39.08\pm1.56           $ & $34.81\pm2.04$            \\
IRM          & $41.30\pm1.28           $ & $40.16\pm1.74           $ & $35.12\pm2.71$            \\
DIR          & $45.50\pm2.15           $ & $43.36\pm1.64           $ & $39.87\pm0.56$            \\
\proj        & $\mathbf{53.27}^\dagger\pm5.12           $ & $\mathbf{56.50}^\dagger\pm3.96           $ & $\mathbf{53.11}^\dagger\pm4.64$            \\
$\text{\proj}^*$    & $43.27\pm4.58           $ & $42.51\pm5.32           $ & $\mathbf{45.76}^\dagger\pm5.32$            \\
\bottomrule
\label{table:dir-acc}
\end{tabular}
\end{sc}
}
\end{center}
\vskip -0.7cm
\end{table}

\textbf{Ablation Studies.} We conduct ablation studies from three aspects: First, the importance of stochasticity in \proj, where we replace the Bernoulli sampling procedure with setting attention $\alpha_{uv}=p_{uv}$ without stochasticity; Second, the importance of the information regularization term (Eq.~\eqref{eq:reg}), where we set its coefficient $\beta=0$ in Eq.~\eqref{eq:proj}; Third, the superiority of the information regularization term over the sparsity-driven term $\ell_1$-norm. 

As shown in Table \ref{table:ab-beta-noise-gin}, the performance drops significantly when there is either no stochasticity or $\beta = 0$. Specifically, \proj-NoStoch means applying deterministic attention $\in[0,1]$, which causes the most performance drop. \proj-NoStoch-$\beta=0$ corresponds to using deterministic attention without the regularization term in Eq.~\eqref{eq:reg}, which causes the second most performance drop. \proj-$\beta=0$ denotes applying stochastic attention with no regularization, which performs better than baselines but worse than original \proj and suffers from large variance.
Overall, no stochasticity yields the biggest drop, which well matches our theory.
This also implies that directly using the deterministic attention mechanisms such as GAT~\cite{velivckovic2018graph} or GGNN~\cite{li2015gated} may not yield good interpretability. 

Fig.~\ref{fig:reg-compare} shows that our information regularization term can achieve consistently better performance than the sparsity-driven $\ell_1$-norm regularization even when the grid search is used to tune hyperparameters. We also observe that when $r$ is close to 0, the results often get decreased or have  higher variance. The best performance is often achieved when $r\in [0.5, 0.9]$, which matches our theory. More results on other datasets can be found in Fig.~\ref{fig:sp0709_l1} in the appendix.

\begin{table}[t]
\tiny
\vspace{-2mm}
\caption{Ablation study on $\beta$ and stochasticity in \proj (GIN as the backbone model) on Spurious-Motif. We report both interpretation ROC AUC (top) and prediction accuracy (bottom).}
\vspace{-0.2cm}
\begin{center}
\begin{sc}
\begin{tabular}{lccc}
\toprule
 Spurious-motif & $b = 0.5$          & $b=0.7$           & $b=0.9$           \\
\midrule
\proj                    & $79.81\pm3.98 $ & $74.07\pm5.28 $ & $71.97\pm4.41$ \\
\proj-$\beta=0$          & $66.00\pm11.04$ & $65.92\pm3.28 $ & $66.31\pm6.82$ \\
\proj-NoStoch           & $59.64\pm5.33 $ & $55.78\pm2.84 $ & $55.27\pm7.49$ \\
\proj-NoStoch-$\beta=0$ & $63.37\pm12.33$ & $60.61\pm10.08$ & $66.19\pm7.76$ \\
\midrule  
 GIN      & $39.87\pm1.30$ & $39.04\pm1.62$ & $38.57\pm2.31$ \\
\proj                    & $51.86\pm5.51$ & $49.12\pm3.29$ & $44.22\pm5.57$ \\
\proj-$\beta=0$          & $45.97\pm8.37$ & $49.67\pm7.01$ & $49.84\pm5.45$ \\
\proj-NoStoch           & $40.34\pm2.77$ & $41.90\pm3.70$ & $37.98\pm2.64$ \\
\proj-NoStoch-$\beta=0$ & $43.41\pm8.05$ & $45.88\pm9.54$ & $42.25\pm9.77$ \\
\bottomrule
\label{table:ab-beta-noise-gin}
\end{tabular}
\end{sc}
\end{center}
\vspace{-6mm}
\end{table}

\begin{figure}[t]
     \centering
     \includegraphics[trim={0.6cm 0.0cm 0.6cm 0.1cm},clip,width=0.494\linewidth]{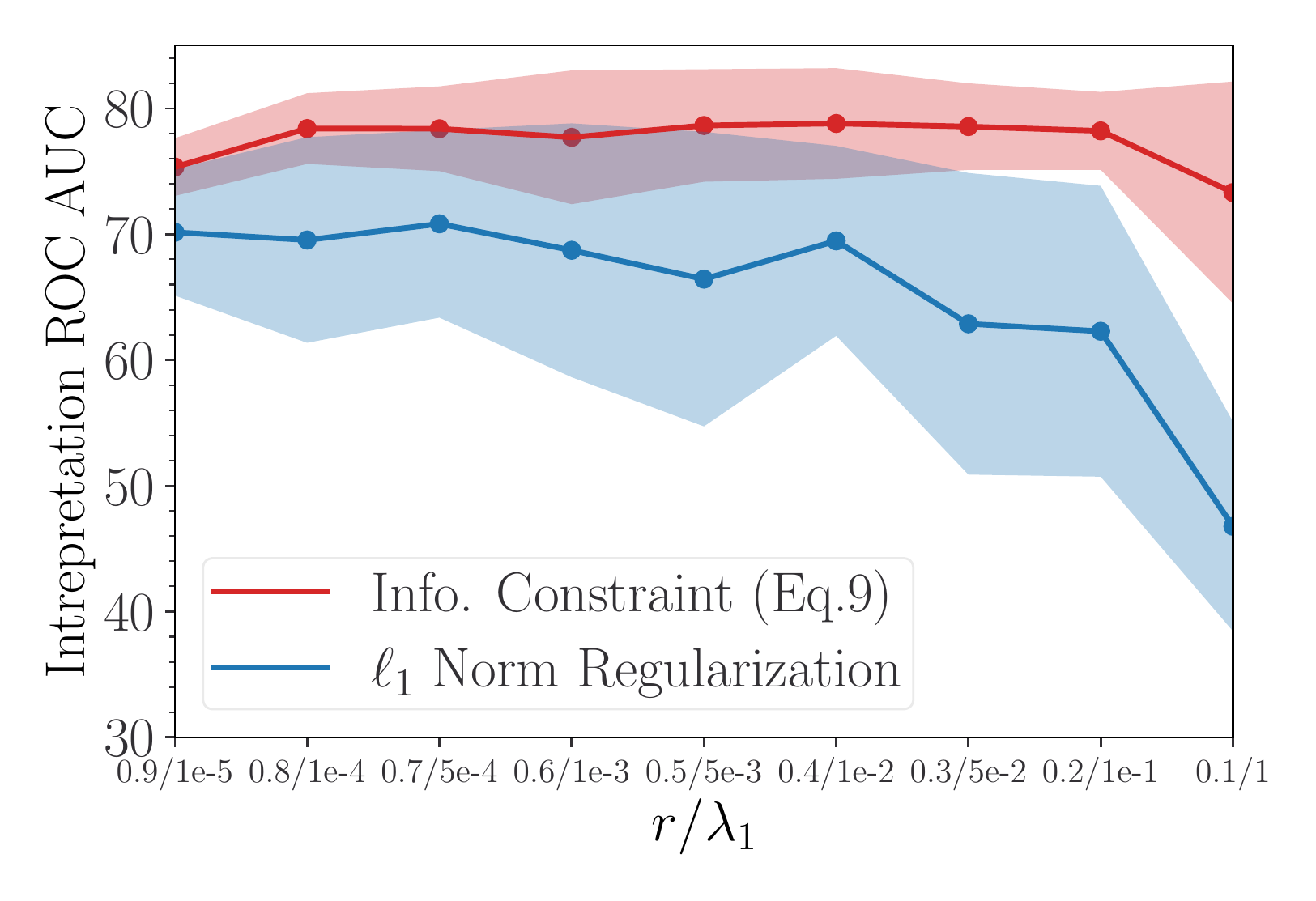}
     \includegraphics[trim={0.6cm 0.0cm 0.6cm 0.1cm},clip,width=0.494\linewidth]{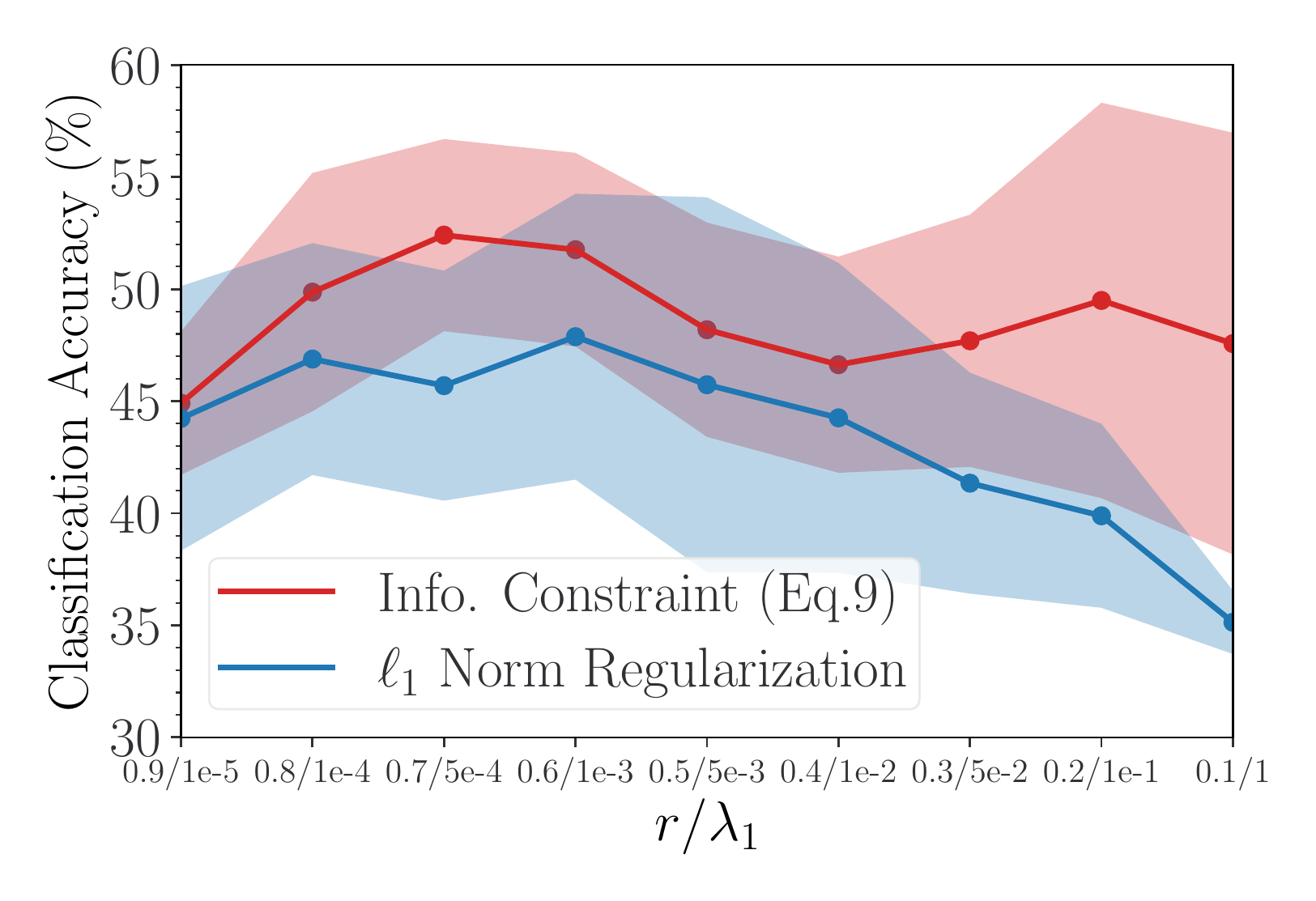}
     \vspace*{-9mm}
    \caption{Comparison between (a) using the information constraint in Eq.~\eqref{eq:reg} and (b) replacing it with $\ell_1$-norm. Results are shown for Spurious-Motif $b=0.5$, where $r$ is tuned from $0.9$ to $0.1$ and the coefficient of the $\ell_1$-norm $\lambda_1$ is tuned from $1e\text{-}5$ to $1$.}
    \vspace{-0.2cm}
    \label{fig:reg-compare}
\end{figure}

\section{Conclusion}
\emph{Graph Stochastic Attention} (\proj) is a novel attention mechanism to build interpretable graph learning models. \proj injects stochasticity to block label-irrelevant information and leverages the reduction of stochasticity to select label-relevant subgraphs. Such rationale is grounded by the information bottleneck principle. \proj has many transformative characteristics. For example, it removes the sparsity, continuity or other potentially biased assumptions in graph learning interpretation without performance decay. It can also remove spurious correlation to better the model generalization. As a by-product, we also reveal a potentially severe issue behind post-hoc interpretation methods from the optimization perspective of information bottleneck.

\subsubsection*{Acknowledgments}
We greatly thank the actionable suggestions given by reviewers. S. Miao and M. Liu are supported by the National Science Foundation (NSF) award HDR-2117997. P. Li is supported by the JPMorgan Faculty Award.

\bibliography{reference}
\bibliographystyle{icml2022}

\newpage
\appendix
\onecolumn

\section{Supplementary Notations for Information Theory and Graph Neural Networks}
\textbf{Entropy.} Given a discrete random variable $a$, its entropy is defined as $H(a) \triangleq - \sum_{a} \mathbb{P}(a)\log \mathbb{P}(a)$. If $a$ is a continuous random variable, its differential entropy is defined as $H(a) \triangleq - \int_{a} \mathbb{P}(a)\log \mathbb{P}(a)da$.  

\textbf{KL-Divergence}. Given two distributions $\mathbb{P}(x)$ and $\mathbb{Q}(x)$, KL-Divergence is used to measure the difference between $\mathbb{P}$ and $\mathbb{Q}$, and it is defined as $\text{KL}(\mathbb{P}(x)||\mathbb{Q}(x)) \triangleq \sum_{x}\mathbb{P}(x)\log \frac{\mathbb{P}(x)}{\mathbb{Q}(x)}$.

\textbf{Mutual Information.} Given two random variables $a$ and $b$, the mutual information (MI) $I(a;b)$ is a measure of the mutual dependence between them.  MI quantifies the amount of information regarding one random variable if another random variable is known. Formally, $I(a;b) \triangleq \sum_{a,b} \mathbb{P}(a,b) \log \frac{\mathbb{P}(a,b)}{\mathbb{P}(a)\mathbb{P}(b)}$, where $\mathbb{P}(a,b)$ is the joint distribution and $\mathbb{P}(a)$, $\mathbb{P}(b)$ are the marginal distributions. By definition, $I(a,b) = \text{KL}(\mathbb{P}(a,b)||\mathbb{P}(a)\mathbb{P}(b)) = \sum_{a,b}\mathbb{P}(a,b)\log\mathbb{P}(a|b) - \sum_{b}\mathbb{P}(b)\log\mathbb{P}(b)= - H(a|b) + H(b)$.

\textbf{Graph Neural Networks (GNNs).} Given an $L$-layer GNN, let $h_v^{(l)}$ denote the node representation for node $v$ in the $i^{th}$ layer  and $\mathcal{N}(v)$ denote a set of nodes adjacent to node $v$. Let $h_v^{(0)}$ be the node feature $X_v$. 
Most GNNs follow a message passing scheme, where there are two main steps in each layer: (1) neighbourhood aggregation, $m_{v}^{(l)}=\operatorname{AGG}(\{h_{u}^{(l-1)} | u \in \mathcal{N}(v)\})$; (2) node representation update, $h_{v}^{(l)}=\operatorname{UPDATE}(m_{v}^{(l)}, h_{v}^{(l-1)})$. For graph classification tasks, after obtaining $h_v^{(L)}$ for each node, the graph representation is given by $h_G=\operatorname{POOL}(\{h_v^{(L)}| v \in V\})$ and $h_G$ will be used to make predictions. The above $\operatorname{AGG}$, $\operatorname{UPDATE}$, $\operatorname{POOL}$ are three functions. $\operatorname{AGG}$ and $\operatorname{POOL}$ are typically implemented via $\operatorname{SUM}$, $\operatorname{MEAN}$ and $\operatorname{MAX}$ while $\operatorname{UPDATE}$ is a fully connected (typically shallow) neural network. In some cases, edge representations may be in need, and they are often given by $h_{u,v}^{(l)} = \operatorname{CONCAT}(h_u^{(l)}, h_v^{(l)})$.

\section{Variational Bounds for the GIB Objective --- Eq.~\eqref{eq:predictor} and Eq.~\eqref{eq:extractor}} \label{appx:deriving}
From Eq.~\eqref{eq:GIB2}, the IB objective is:
\begin{align} 
    \min_{\phi} -I(G_S ; Y) + \beta I(G_S;G),\, \text{s.t.}\,\, G_S \sim g_{\phi}(G). 
\end{align}
To optimize it, we introduce two variational bounds on the two terms, respectively.

For the first term $I\left(G_S ; Y\right)$, by definition:
\begin{equation}
    I\left(G_S ; Y\right) = \mathbb{E}_{G_S, Y} \left[ \log \frac{\mathbb{P}(Y|G_S)}{\mathbb{P}(Y)} \right].
\end{equation}
Since $\mathbb{P}(Y|G_S)$ is intractable, we introduce a variational approximation $\mathbb{P}_{\theta}(Y|G_S)$ for it. Then, we obtain a lower bound for Eq.~\eqref{eq:predictor}:
\begin{align}  
    I\left(G_S ; Y\right) &= \mathbb{E}_{G_S, Y} \left[ \log \frac{\mathbb{P}_{\theta}(Y|G_S)}{ \mathbb{P}(Y)} \right] + \mathbb{E}_{G_S} \left[ \text{KL}( \mathbb{P}(Y|G_S)  || \mathbb{P}_{\theta}(Y|G_S)) \right] \nonumber  \\
    & \geq \mathbb{E}_{G_S, Y} \left[ \log \frac{\mathbb{P}_{\theta}(Y|G_S)}{ \mathbb{P}(Y)} \right] \nonumber  \\
    & = \mathbb{E}_{G_S, Y} \left[ \log {\mathbb{P}_{\theta}(Y|G_S)} \right] + H(Y).
\end{align}

For the second term $I \left(G ; G_S\right)$, by definition:
\begin{equation}
    I \left(G ; G_S\right) = \mathbb{E}_{G_S, G} \left[ \log \frac{\mathbb{P}(G_S|G)}{\mathbb{P}(G_S)} \right].
\end{equation}
Since $\mathbb{P}(G_S)$ is intractable, we introduce a variational approximation $\mathbb{Q}(G_S)$ for the marginal distribution $\mathbb{P}(G_S) = \sum_{G}\mathbb{P}_{\phi}(G_S|G)\mathbb{P}_{\mathcal{G}}(G)$. Then, we obtain an upper bound for Eq.~\eqref{eq:extractor}:
\begin{align}
    I \left(G ; G_S\right) &= \mathbb{E}_{G_S, G} \left[ \log \frac{\mathbb{P}_{\phi}(G_S|G)}{\mathbb{Q}(G_S)} \right] - \text{KL} \left( \mathbb{P}(G_S) || \mathbb{Q}(G_S) \right) \nonumber \\
    &\leq \mathbb{E}_{G} \left[ \text{KL} \left( \mathbb{P}_{\phi}(G_S|G) || \mathbb{Q}(G_S) \right) \right].
\end{align}

\section{Supplementary Experiments} \label{appx:setting}

\subsection{Details of the Datasets} \label{appx:datasets}
\textbf{Mutag}~\cite{debnath1991structure} is a molecular property prediction dataset, where nodes are atoms and edges are chemical bonds. Each graph is associated with a binary label based on its mutagenic effect. Following~\cite{luo2020parameterized}, -NO$_2$ and -NH$_2$ in mutagen graphs are labeled as ground-truth explanations.

\textbf{BA-2Motifs}~\cite{luo2020parameterized} is a synthetic dataset, where the base graph is generated by Barabási-Albert (BA) model. Each base graph is attached with a house-like motif or a five-node cycle motif. House motifs and cycle motifs give class labels and thus are regarded as ground-truth explanations for the two classes respectively.

\textbf{Spurious-Motif}~\cite{anonymous2022discovering} is a synthetic dataset with three graph classes. Following the notations in~\cite{anonymous2022discovering}, each graph consists of a base graph (tree/ladder/wheel denoted by $\bar{G}_S = 0, 1, 2$ respectively, with some abuse of notations) and a motif (cycle/house/crane denoted by $G_S = 0, 1, 2$, respectively, with some abuse of notations). The label is determined only by $G_S$, while there also exists spurious correlation between the label and $\bar{G}_S$. Specifically, to construct a graph in the training set, $G_S$ will be sampled uniformly, while $\bar{G}_S$ will be sampled with probability $\mathbb{P}(\bar{G}_S)$, where $\mathbb{P}(\bar{G}_S) = b$ if $\bar{G}_S=G_S$; otherwise $\mathbb{P}(\bar{G}_S) = (1-b) / 2$. So, $b$ is a parameter used to control the degree of such spurious correlation. When $b=1/3$, there is no spurious correlation. We include datasets with $b=0.5$, $b=0.7$ and $b=0.9$. Note that for testing data, the motifs and bases are randomly attached to each other, which can test if the model overfits the spurious correlation.

\textbf{MNIST-75sp} ~\cite{knyazev2019understanding} is a image classification dataset, where each image in MNIST is converted to a superpixel graph. Each node in the graph represents a superpixel and edges are formed based on spatial distance between superpixel centers. Node features are the coordinates of their centers of masses. Nodes with nonzero pixel values provide ground-truth explanations. Note that the subgraphs that provide explanations are of different sizes in this dataset.
    
\textbf{Graph-SST2} ~\cite{socher2013recursive, yuan2020explainability} is a sentiment analysis dataset, where each text sequence in SST2 is converted to a graph. Each node in the graph represents a word and edges are formed based on relationships between different words. We follow the dataset splits in~\cite{anonymous2022discovering} to create degree shifts in the training set, which can better test generalizability of models. Specifically, graphs with higher average node degree will be used to train and validate models, while graphs with fewer nodes will be used to test models. And this dataset contains no ground-truth explanation labels, so we only evaluate prediction performance here and provide interpretation visualizations in Appendix~\ref{appx:visz}.

\textbf{OGBG-Molhiv} ~\cite{wu2018moleculenet, hu2020ogb} is a molecular property prediction datasets, where nodes are atoms and edges are chemical bonds. A binary label is assigned to each graph according to whether a molecule inhibits HIV virus replication or not. We also evaluate \proj on molbace, molbbbp, molclintox, moltox21 and molsider datasets from OGBG. As there are no ground truth explanation labels for these datasets, we only evaluate the prediction performance of \proj.  

\begin{table}[t]
\caption{Direct comparison with the interpretation ROC AUC of GNNExplainer and PGExplainer reported in~\cite{luo2020parameterized}, which are given a selected pre-trained model.}
\begin{center}
\begin{sc}
\begin{tabular}{lcc}
\toprule
  & Ba-2motifs     & Mutag          \\
\midrule
GNNExplainer & $74.2$ & $72.7$  \\
PGExplainer & $92.6$ & $87.3$       \\
\proj   & $\mathbf{98.74}^\dagger\pm0.55  $ & $\mathbf{99.60}^\dagger\pm0.51$ \\
$\text{\proj}^*$ & $\mathbf{97.43}^\dagger\pm0.02$ & $\mathbf{97.75}^\dagger\pm0.92$ \\
\bottomrule
\label{table:pge-auc}
\end{tabular}
\end{sc}
\end{center}
\vspace{-9mm}
\end{table}

\begin{table}[t]
\caption{Direct comparison with the interpretation precision@5 of DIR reported in \cite{anonymous2022discovering} based on the backbone model in \cite{anonymous2022discovering}.}
\begin{center}
\begin{sc}
\begin{tabular}{lccc}
\toprule
  & \multicolumn{3}{c}{Spurious-motif}                   \\
  & $b = 0.5$          & $b=0.7$           & $b=0.9$           \\
\midrule
GNNExplainer & $0.203\pm0.019$ & $0.167\pm0.039$ & $0.066\pm0.007$ \\
DIR & $0.255\pm0.016$ & $0.247\pm0.012$ & $0.192\pm0.044$ \\
\proj & $\mathbf{0.519}^\dagger\pm0.022$ & $\mathbf{0.503}^\dagger\pm0.034$ & $\mathbf{0.416}^\dagger\pm0.081$ \\
$\text{\proj}^*$ & $\mathbf{0.532}^\dagger\pm0.019$ & $\mathbf{0.512}^\dagger\pm0.011$ & $\mathbf{0.520}^\dagger\pm0.022$ \\
\bottomrule
\label{table:dir-prec5}
\end{tabular}
\end{sc}
\end{center}
\vspace{-9mm}
\end{table}

\begin{table}[t]
\caption{Ablation study on $\beta$ and stochasticity in GSAT (PNA as the
backbone model) on Spurious-Motif. We report both interpretation
ROC AUC (top) and prediction accuracy (bottom).}
\begin{center}
\begin{sc}
\begin{tabular}{lccc}
\toprule
Spurious-motif  & $b = 0.5$          & $b=0.7$           & $b=0.9$           \\
\midrule
PNA+\proj                    & $83.34\pm2.17 $ & $86.94\pm4.05 $ & $88.66\pm2.44$  \\
PNA+\proj-$\beta=0$          & $82.01\pm6.43 $ & $78.88\pm6.74 $ & $80.53\pm5.03$  \\
PNA+\proj-NoStoch           & $79.72\pm3.86 $ & $76.36\pm2.57 $ & $80.21\pm3.76$  \\
PNA+\proj-NoStoch-$\beta=0$ & $78.69\pm10.77$ & $78.97\pm13.95$ & $79.91\pm13.11$ \\
\midrule
PNA                          & $68.15\pm2.39 $ & $66.35\pm3.34 $ & $61.40\pm3.56$ \\
PNA+\proj                    & $68.74\pm2.24 $ & $64.38\pm3.20 $ & $57.01\pm2.95$ \\
PNA+\proj-$\beta=0$          & $59.68\pm7.28 $ & $58.03\pm11.84$ & $53.94\pm8.11$ \\
PNA+\proj-NoStoch.           & $51.92\pm11.17$ & $41.22\pm7.72 $ & $39.56\pm2.74$ \\
PNA+\proj-NoStoch.-$\beta=0$ & $56.54\pm6.88 $ & $48.93\pm10.33$ & $45.82\pm9.60$ \\
\bottomrule
\label{table:ab-beta-noise-pna}
\end{tabular}
\end{sc}
\end{center}
\vspace{-9mm}
\end{table}

\begin{figure}[t]
    \vskip 0.1in
     \centering
     \begin{subfigure}[t]{0.49\linewidth}
         \centering
         \includegraphics[width=0.49\linewidth]{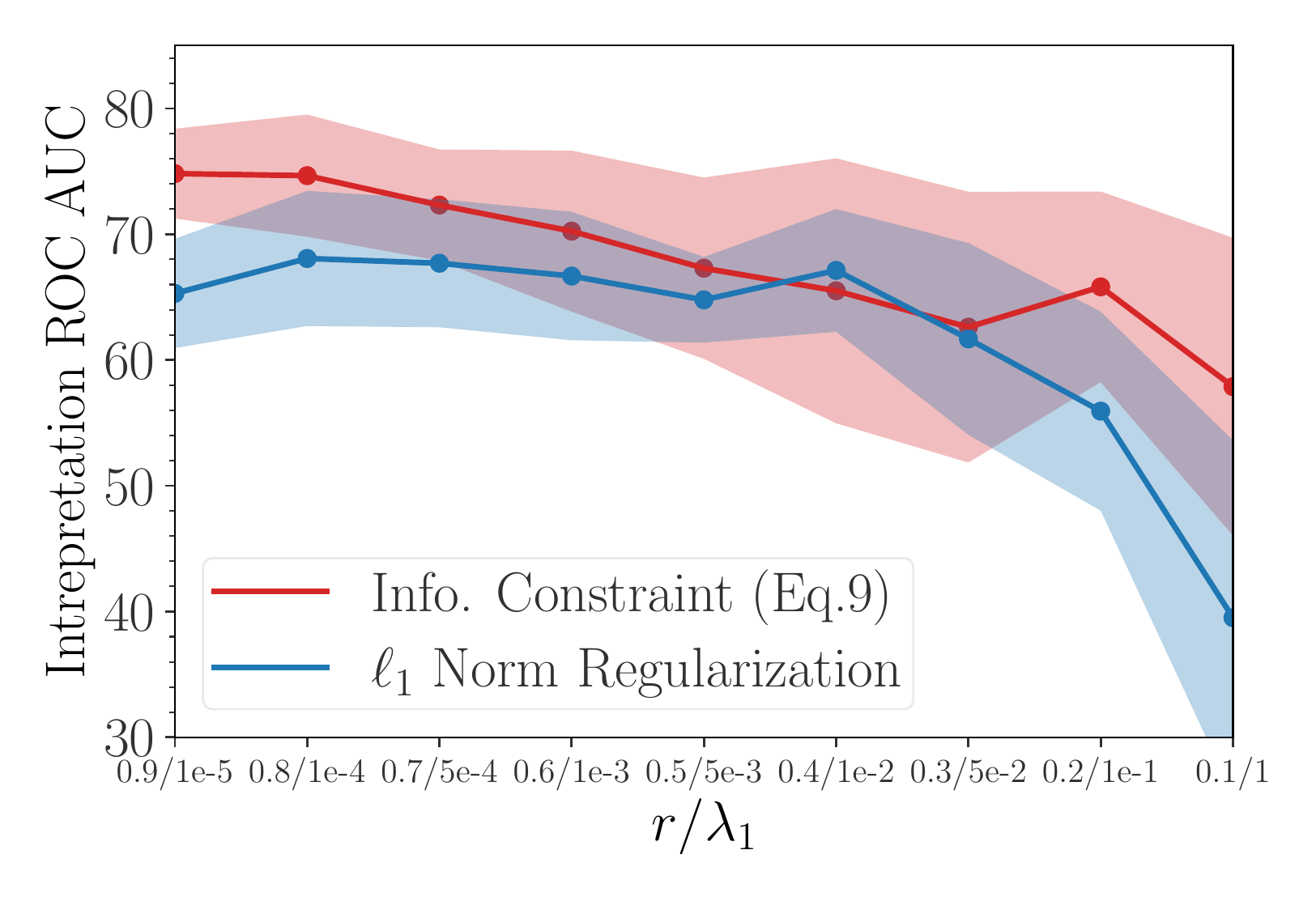}
         \includegraphics[width=0.49\linewidth]{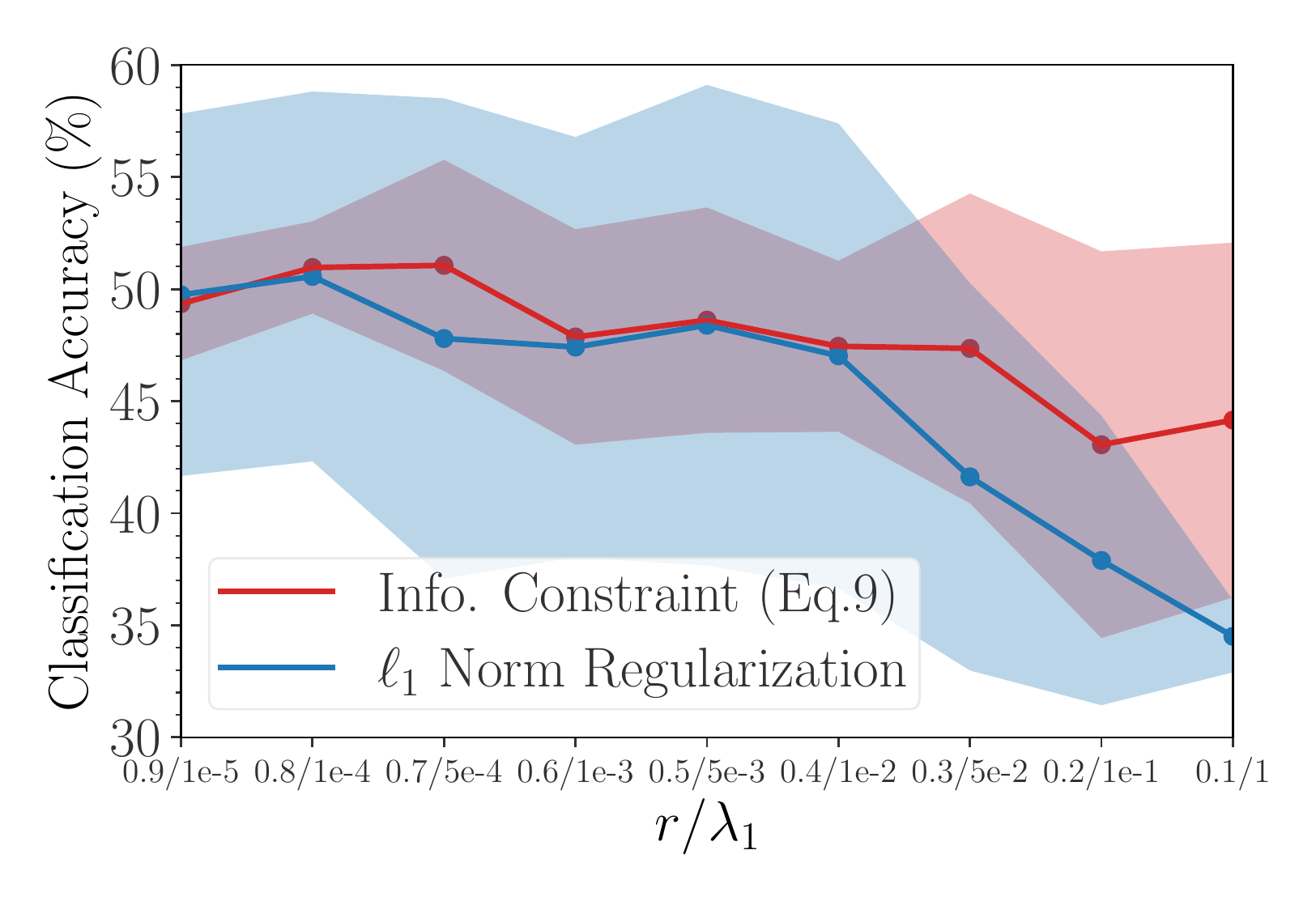}
         \vspace*{-4mm}
         \caption{Spurious-Motif, $b=0.7$}
     \end{subfigure}
     \hfill
     \begin{subfigure}[t]{0.49\linewidth}
         \centering
         \includegraphics[width=0.49\linewidth]{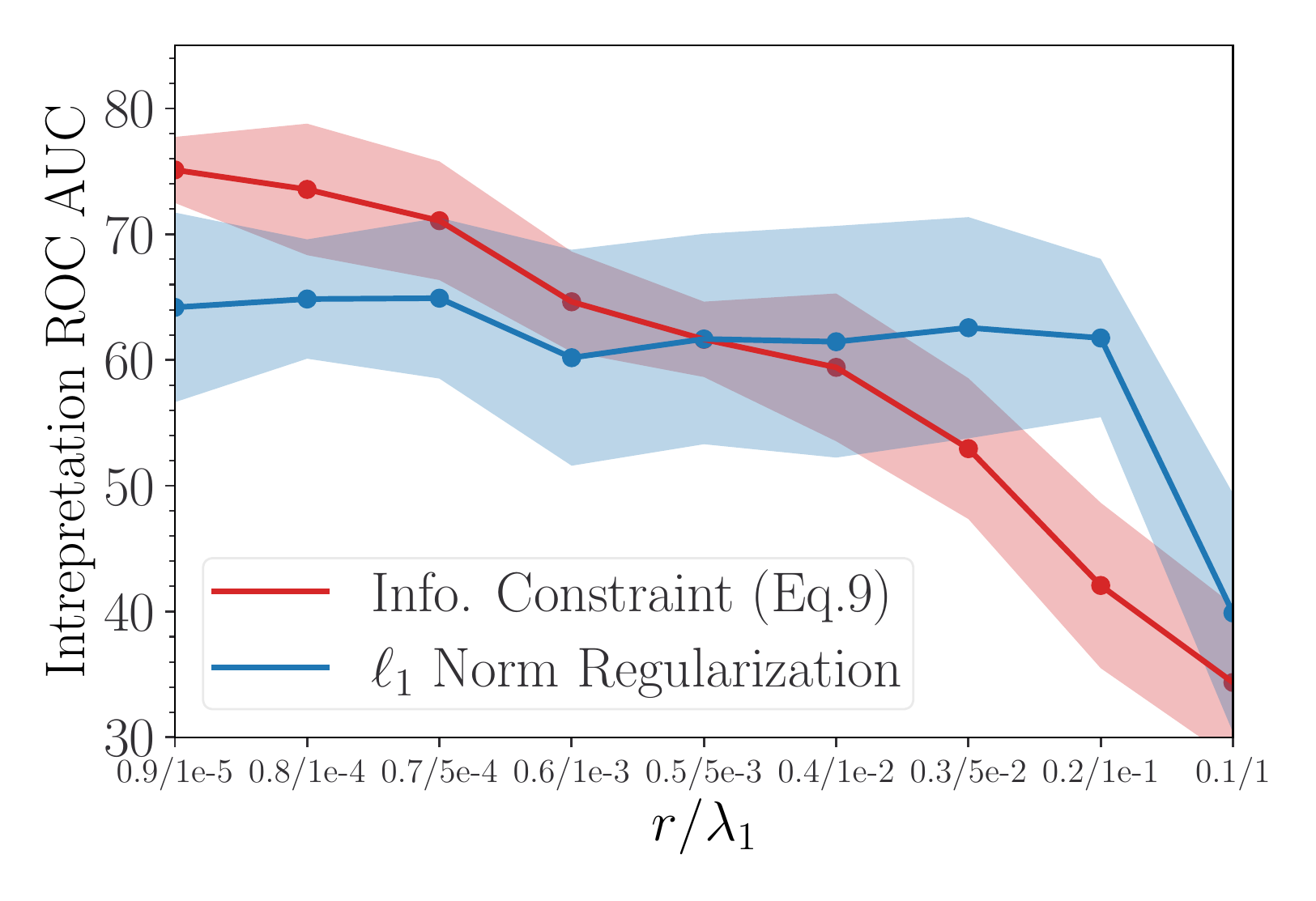}
         \vspace*{-4mm}
         \includegraphics[width=0.49\linewidth]{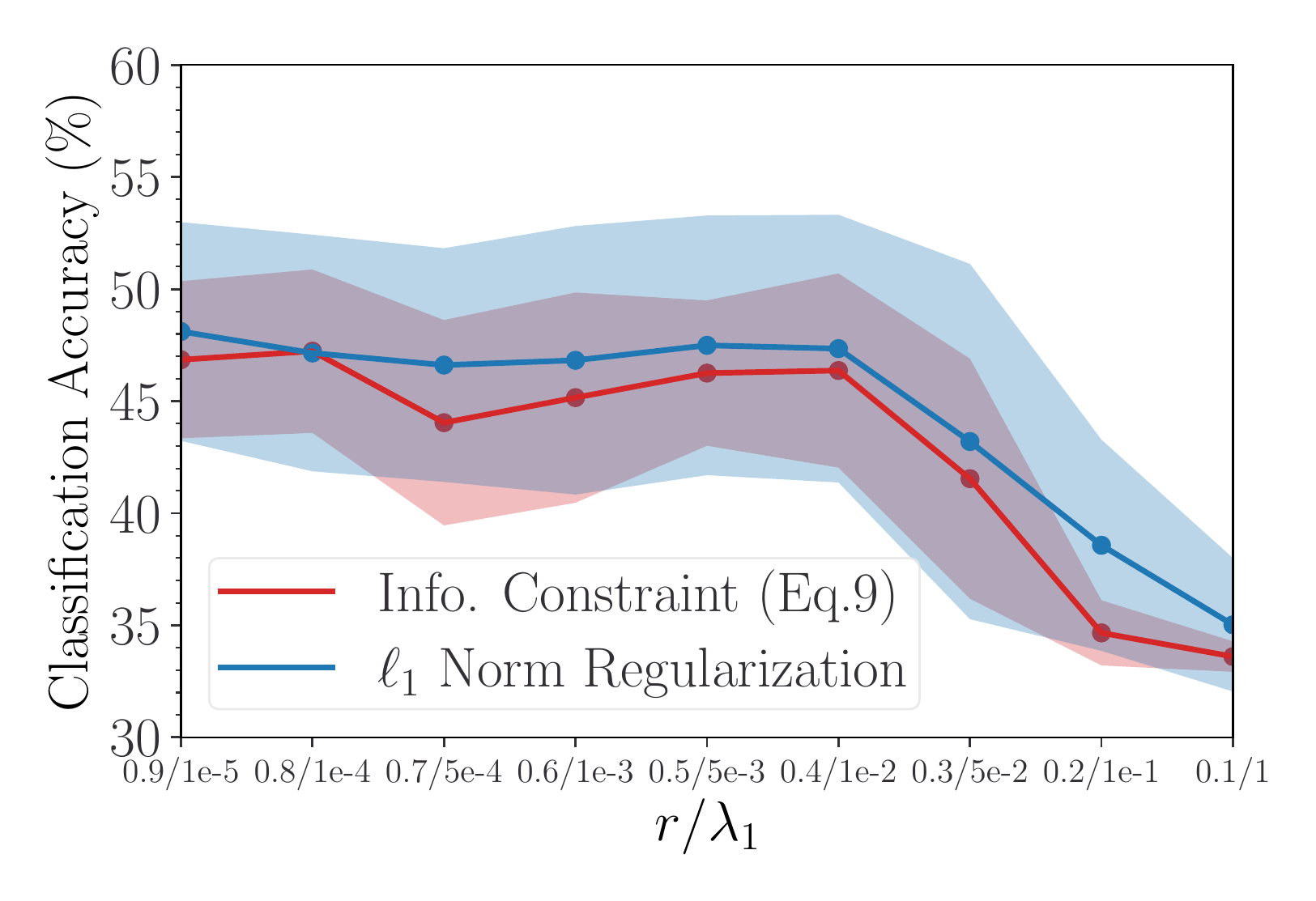}
         \caption{Spurious-Motif, $b=0.9$}
     \end{subfigure}
     \vspace*{-2mm}
    \caption{Ablation study on (a) using the info. constraint in Eq.~\eqref{eq:reg} and (b) replacing it with $\ell_1$-norm, where $r$ is tuned from $0.9$ to $0.1$ and the coefficient of the $\ell_1$-norm $\lambda_1$ is tuned from $1e\text{-}5$ to $1$.}
    \label{fig:sp0709_l1}
       \vspace*{-2mm}
\end{figure}

\subsection{Details on Hyperparameter Tuning}

\subsubsection{Backbone Models}
\textbf{Backbone Architecture.} We use a two-layer GIN ~\cite{xu2018powerful} with $64$ hidden dimensions and $0.3$ dropout ratio. We use the setting from ~\cite{corso2020principal} for PNA, which has 4 layers with $80$ hidden dimensions, $0.3$ dropout ratio, and no scalars are used. For OGBG-Mol datasets, we directly follow ~\cite{corso2020principal} using (mean, min, max, std) aggregators for PNA; yet we find PNA has convergence issues on other datasets when sum aggregator is not used. Hence, PNA uses (mean, min, max, std, sum) aggregators for all other datasets.

\textbf{Dataset Splits.} For Ba-2Motifs, we split it randomly into three sets ($80\%/10\%/10\%$). For Mutag, we split it randomly into $80\%/20\%$ to train and validate models, and following~\cite{luo2020parameterized} we use mutagen molecules with -NO$_2$ or -NH$_2$ as test data (because only these samples have explanation labels). For MNIST-75sp, we use the default splits given by \cite{knyazev2019understanding}; due to its large size in the graph setting, we also reduce the number of training samples following~\cite{anonymous2022discovering} to speed up training. For Graph-SST2, Spurious-Motifs and OGBG-Mol, we use the default splits given by \cite{yuan2020explainability} and \cite{anonymous2022discovering}. Following~\cite{corso2020principal}, edge features are not used for all OGBG-Mol datasets.

\textbf{Epoch.} We tune the number of epochs to make sure the convergence of all models. When GIN is used as the backbone model, MNIST-75sp and OGBG-Molhiv are trained for $200$ epochs, and all other datasets are trained for $100$ epochs. When PNA is used, Mutag and Ba-2Motifs are trained for $50$ epochs and all other datasets are trained for $200$ epochs. We report the performance of the epoch that achieves the best validation prediction performance and use the models that achieve such best validation performance as the pre-trained models. When multiple epochs achieve the same best performance, we report the one with the lowest validation prediction loss.

\textbf{Batch Size.} All datasets use a batch size of $128$; except for MNIST-75sp we use a batch size of $256$ to speed up training due to its large size in the graph setting.

\textbf{Learning Rate.} GIN uses $0.003$ learning rate for Spurious-Motifs and $0.001$ for all other datasets. PNA uses $0.01$ learning rate with scheduler following~\cite{corso2020principal}, $0.003$ learning rate for Graph-SST2 and Spurious-Motifs, and $0.001$ learning rate for all other datasets. 

\subsubsection{\proj}

\textbf{Basic Setting.} If not specified, \proj uses the same settings mentioned for the backbone models. All Spurious-Motif datasets share the same hyperparameters, which are tuned based on $b=0.5$.

\textbf{Learning Rate.} When PNA is used, \proj uses $0.001$ learning rate for all OGBG-Mol datasets; otherwise it uses the same learning rate as mentioned above.

\textbf{$r$ in Equation (\ref{eq:reg}).} Ba-2Motif and Mutag use $r=0.5$, and all other datasets use $r=0.7$. We find $r=0.7$ can generally provide great performance for all datasets. Inspired by curriculum learning~\cite{bengio2009curriculum}, $r$ will initially set to $0.9$ and gradually decay to the tuned value. We adopt a step decay, where $r$ will decay $0.1$ for every $10$ epochs.

\textbf{$\beta$ in Equation (\ref{eq:proj}).} $\beta$ is not tuned and is set to $\frac{1}{|E|}$ for all datasets.

\textbf{Temperature.} Temperature used in the Gumbel-softmax trick~\cite{jang2016categorical} is not tuned, and we use $1$ for all datasets.

\subsubsection{Baseline Interpretable Methods/Models}

\textbf{Basic Setting.} If not specified, baselines use the same settings mentioned for the backbone models. All Spurious-Motif datasets share the same hyperparameters, which are tuned based on $b=0.5$.

\textbf{GNNExplainer.} We tune the learning rate from $(1, 0.1, 0.01, 0.001)$ and the coefficient of the $\ell_1$-norm from $(0.1, 0.01, 0.001)$, based on validation interpretation ROC AUC. The coefficient of the entropy regularization term is set to the recommended value $1$. Again, in a real-world setting, post-hoc methods have no clear metric to tune hyper-parameters.

\textbf{PGExplainer.} We use the tuned recommended settings from~\cite{luo2020parameterized}, including the temperature, the coefficient of $\ell_1$-norm regularization and the coefficient of entropy regularization. 

\textbf{GraphMask.} We use the recommended settings from~\cite{schlichtkrull2021interpreting}, including the temperature, gamma, zeta and the coefficient of $\ell_0$-norm regularization. 

\textbf{DIR.} Causal ratio is tuned for Ba-2Motif and Mutag. Since the other datasets we use are the same, we use the recommended settings from~\cite{anonymous2022discovering}. However, even though datasets are the same, we find the same $\alpha$ specified in their source code do not work well in our setting. Hence, we tune $\alpha$ from $(10, 1, 0.1, 0.01, 0.001, 0.0001, 0.00001, 0.000001)$. 

\textbf{IB-subgraph.} Due to the extreme inefficiency of IB-subgraph, we are only able to tune its mi-weight around the recommended value from $(2, 0.2, 0.02)$. And we use the default inner loop iterations and con-weight as specified in their source code. IB-subgraph needs $\sim$40 hours to train $100$ epochs for $1$ seed on Spurious-Motif and $\sim$150 hours for OGBG-Molhiv on a Quadro RTX 6000. By contrast, \proj only needs $\sim$15 minutes to train $100$ epochs on OGBG-Molhiv.

\textbf{Random Seed.} All methods are trained with $10$ different random seeds; except for IB-subgraph we train it for $5$ different random seeds due to its inefficiency. For post-hoc methods, the pre-trained models are also trained with $10$ different random seeds instead of a fixed pre-trained model in~\cite{luo2020parameterized}. For inherently interpretable models, \proj, IB-subgraph and DIR, we average the best epoch's performance according to their validation prediction performance. For post-hoc baselines, we average their last epoch's performance. For IB-subgraph, we stop training when there is no improvement over $20$ epochs to make the training possible on large datasets.

\subsection{Node/Edge Attention}
We also explore node-level attention, and we find it is especially useful for molecular datasets and datasets with large graph sizes. Hence, we use node-level attention for on Mutag, MNIST-75sp and OGBG-Mol datasets, and for all other datasets we use edge attention. Specifically, when node attention is used, the MLP layers in $\mathbb{P}_{\phi}$ will take as input the node embeddings and output $p_v$ for each $v \in V$. Then, the stochastic node attention is sampled for each node $\alpha_v \sim \text{Bern}(p_v)$. After that, $\alpha_{uv}$ is obtained by $\alpha_{uv} = \alpha_u \alpha_v$.

\subsection{Further Supplementary Experiments} \label{appx:sup_exp}
Fig.~\ref{fig:connect-visual} shows an experiment with disconnected critical subgraphs, where the dataset is generated in a similar way used to generate Ba-2Motifs. Specifically, each base graph is generated using the BA model and will be attached with two house motifs or three house motifs randomly. The number of house motifs represents the graph class. Both \proj and GraphMask are trained with the same settings used on Ba-2Motifs.

Table~\ref{table:pge-auc} shows a direct comparison with PGExplainer and GNNExplainer between the interpretation ROC AUC reported in~\cite{luo2020parameterized} and the performance of \proj. And \proj still outperforms their methods significantly.

Table~\ref{table:dir-acc} and Table~\ref{table:dir-prec5} show direct comparisons with DIR, where we apply \proj with the backbone model used in DIR. And \proj still greatly outperforms their method.

Table~\ref{table:ab-beta-noise-pna} shows the ablation study on $\beta$ and stochasticity in \proj, where PNA is the backbone model. Figure~\ref{fig:sp0709_l1} shows the ablation study of the information constraint introduced in Eq.~\eqref{eq:reg} on Spurious-Motif $b=0.7$ and $b=0.9$. We observe the same trends from these ablation studies as discussed in Sec.~\ref{sec:results}.

\section{Interpretation Visualization} \label{appx:visz}
We provide visualizations of the label-relevant subgraphs discovered by \proj on eight datasets, as shown from Fig.~\ref{viz:mutag} to Fig.~\ref{viz:mnist}. The transparency of the edges shown in the figures represents the normalized attention weights learned by \proj. The normalized attention weights are to rescale the learnt weights $\{p_{uv}|(u,v)\in E\}$ to $[0,1]$: For each graph, denote $p_{\min} = \min \{p_{uv}|(u,v)\in E\}$  and $p_{\max} = \max \{p_{uv}|(u,v)\in E\}$. We rescale the weights according to
\begin{align}\label{eq:normalization}
    \hat{p}_{uv} = \frac{p_{uv}-p_{\min}}{p_{\max}-p_{\min}}
\end{align}  

\begin{figure}[t]
\begin{center}
\centerline{\includegraphics[width=1\columnwidth]{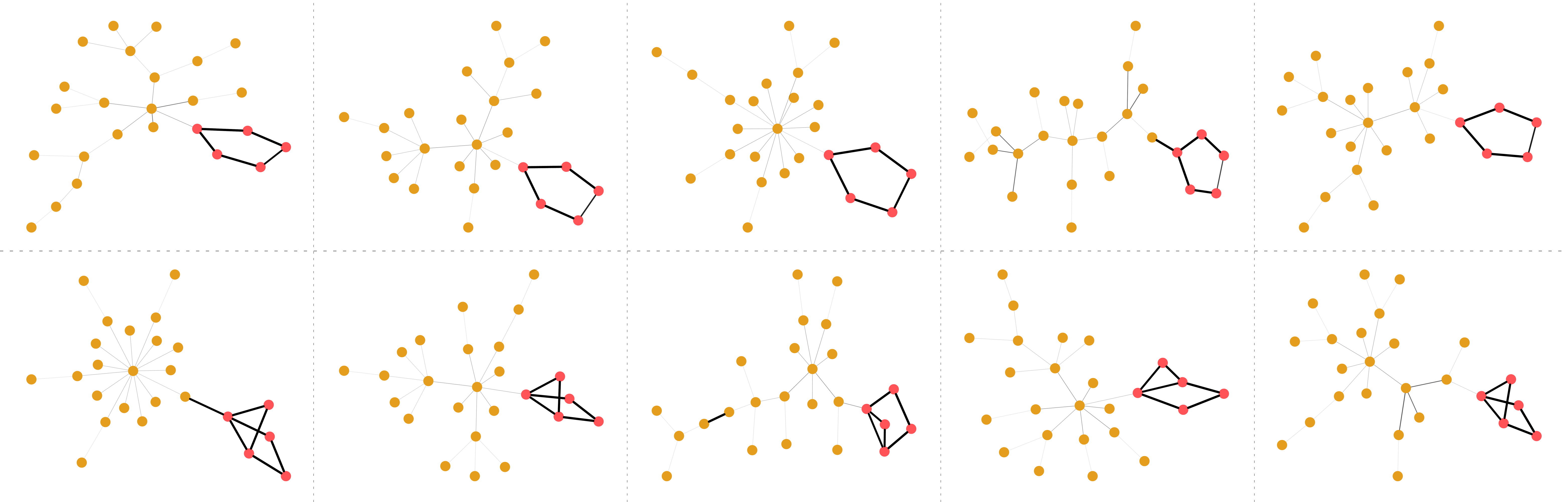}}
\vspace{-3mm}
\caption{Visualizing label-relevant subgraphs discovered by \proj for Ba-2Motifs. Nodes colored pink are ground-truth explanations, and each row represents a graph class.}
\label{viz:mutag}
\end{center}
\vspace{-5mm}
\end{figure}

\begin{figure}[t]
\begin{center}
\centerline{\includegraphics[width=1\columnwidth]{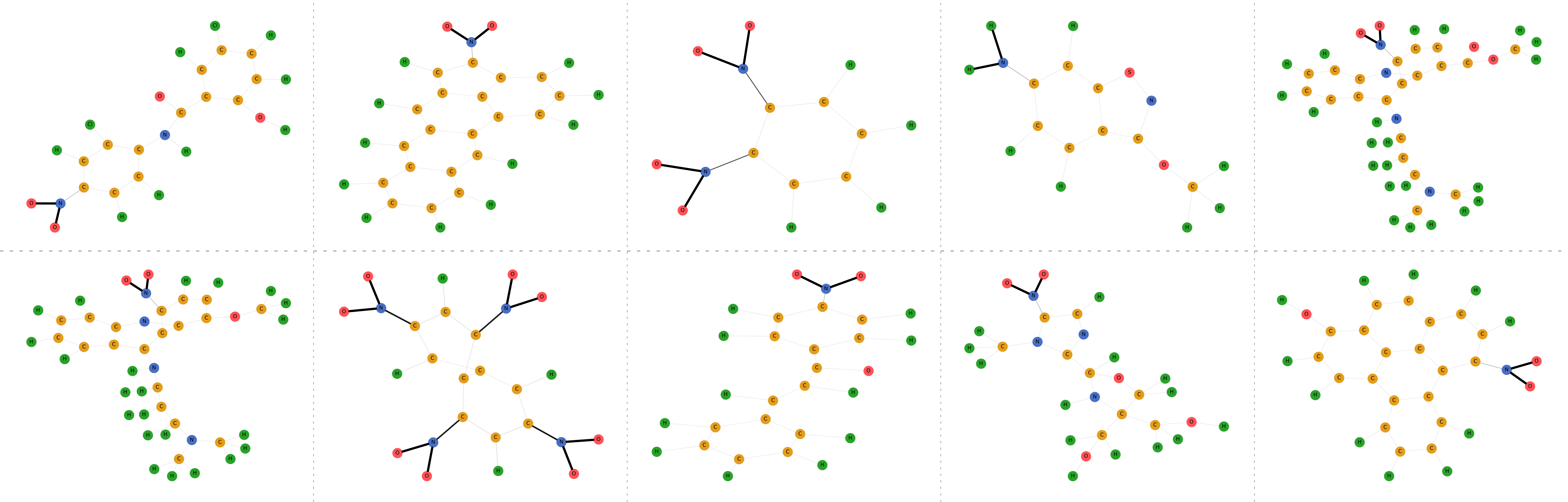}}
\vspace{-3mm}
\caption{Visualizing label-relevant subgraphs discovered by \proj for Mutag. -NO$_2$ and -NH$_2$ are ground-truth explanations. We only present mutagen graphs as only these graphs are with ground-truth explanation labels.}
\end{center}
\vspace{-7mm}
\end{figure}

\begin{figure}[t]
\begin{center}
\centerline{\includegraphics[width=1\columnwidth]{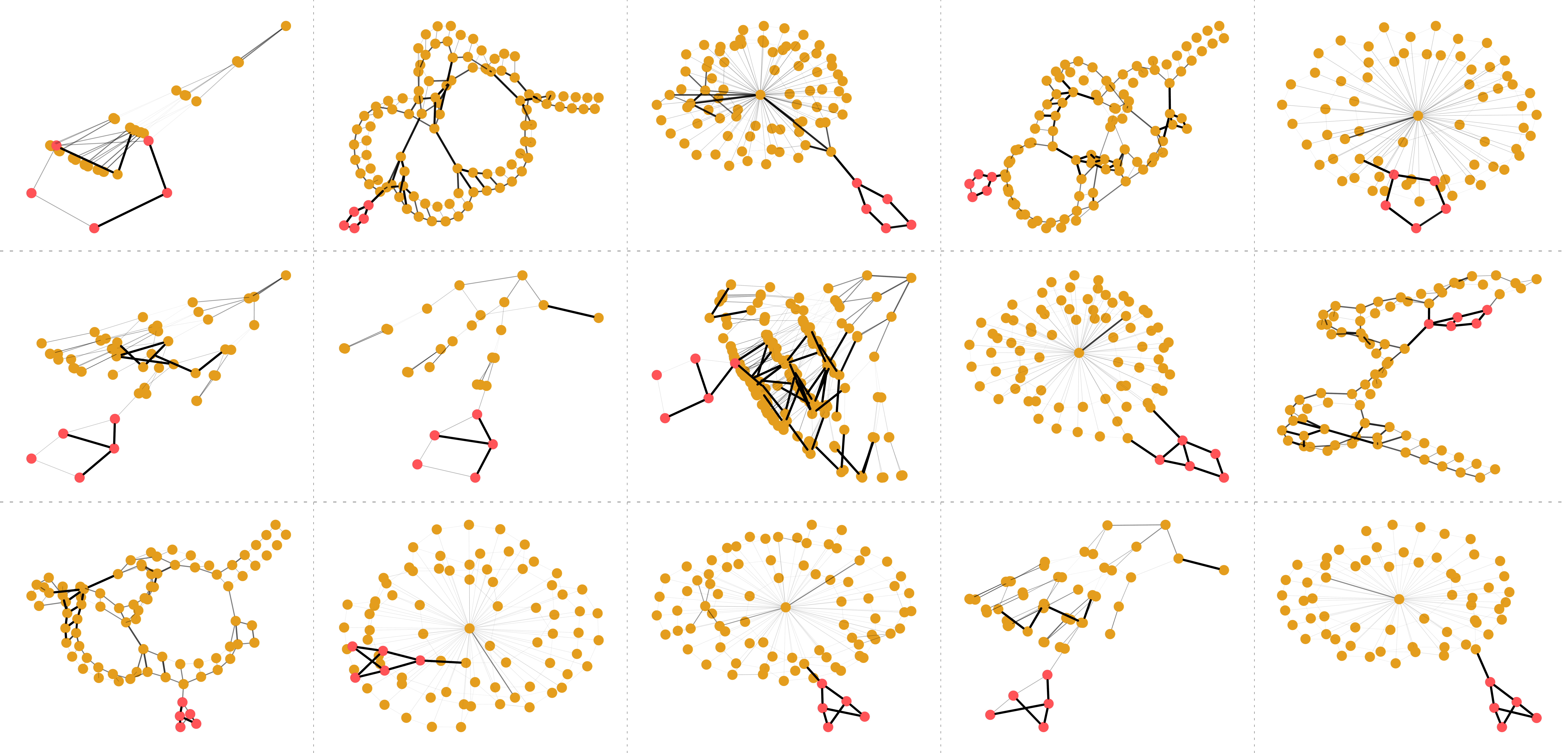}}
\caption{Visualizing label-relevant subgraphs discovered by \proj for Spurious-Motif $b=0.5$. Nodes colored pink are ground-truth explanations, and each row represents a graph class.}
\end{center}
\end{figure}

\begin{figure}[t]
\begin{center}
\centerline{\includegraphics[width=1\columnwidth]{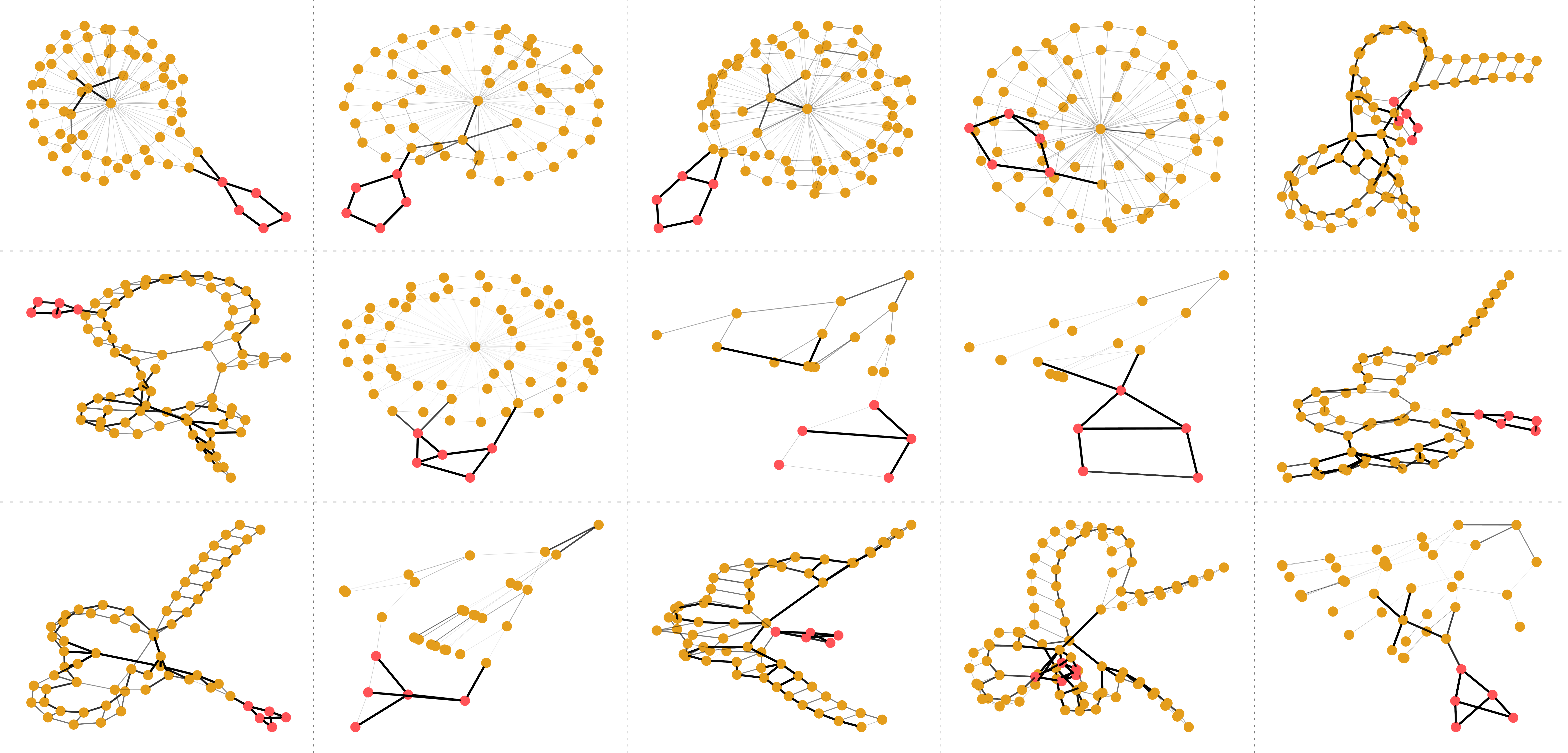}}
\caption{Visualizing label-relevant subgraphs discovered by \proj for Spurious-Motif $b=0.7$. Nodes colored pink are ground-truth explanations, and each row represents a graph class.}
\end{center}
\end{figure}

\begin{figure}[t]
\begin{center}
\centerline{\includegraphics[width=1\columnwidth]{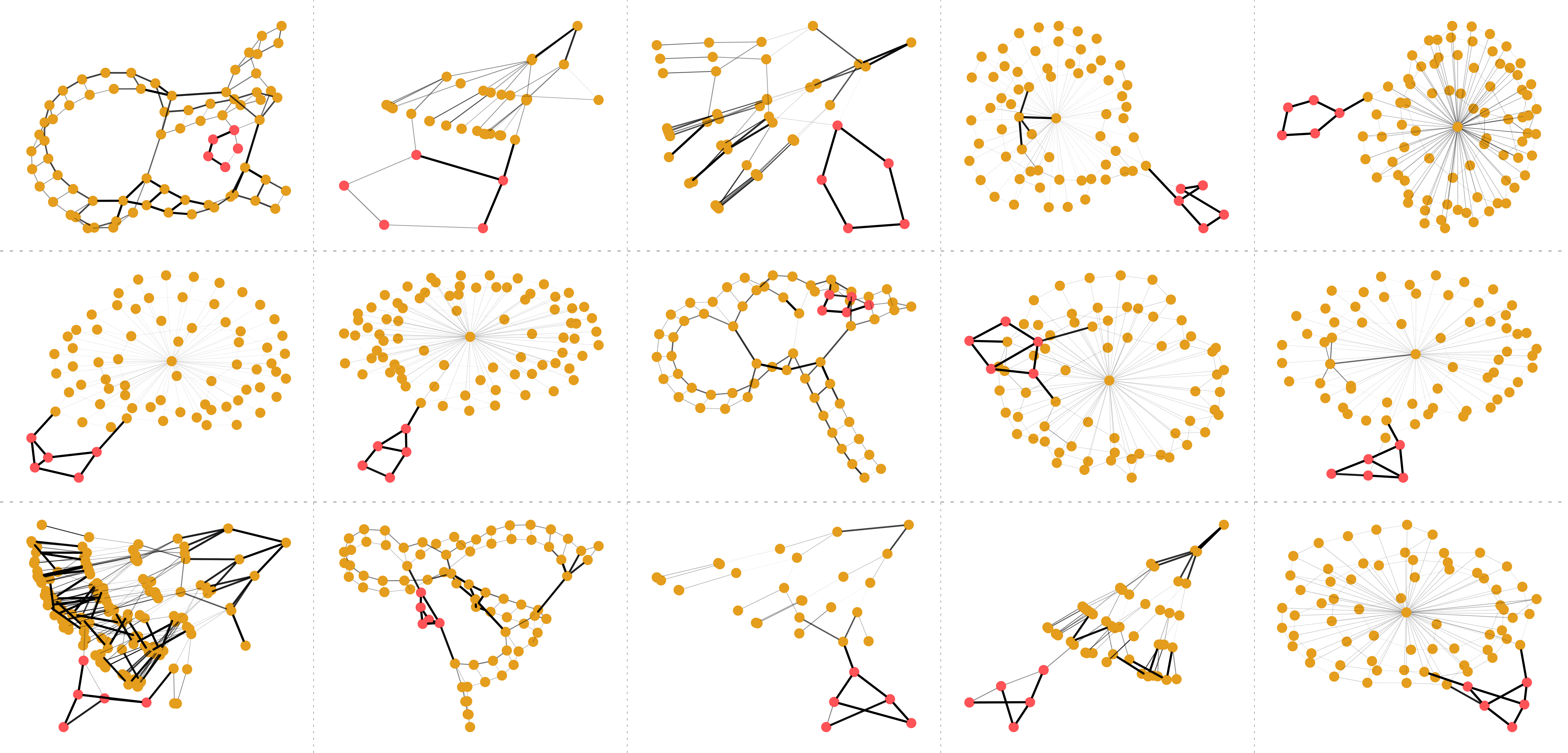}}
\caption{Visualizing label-relevant subgraphs discovered by \proj for Spurious-Motif $b=0.9$. Nodes colored pink are ground-truth explanations, and each row represents a graph class.}
\end{center}
\end{figure}

\begin{figure}[t]
\begin{center}
\centerline{\includegraphics[width=1\columnwidth]{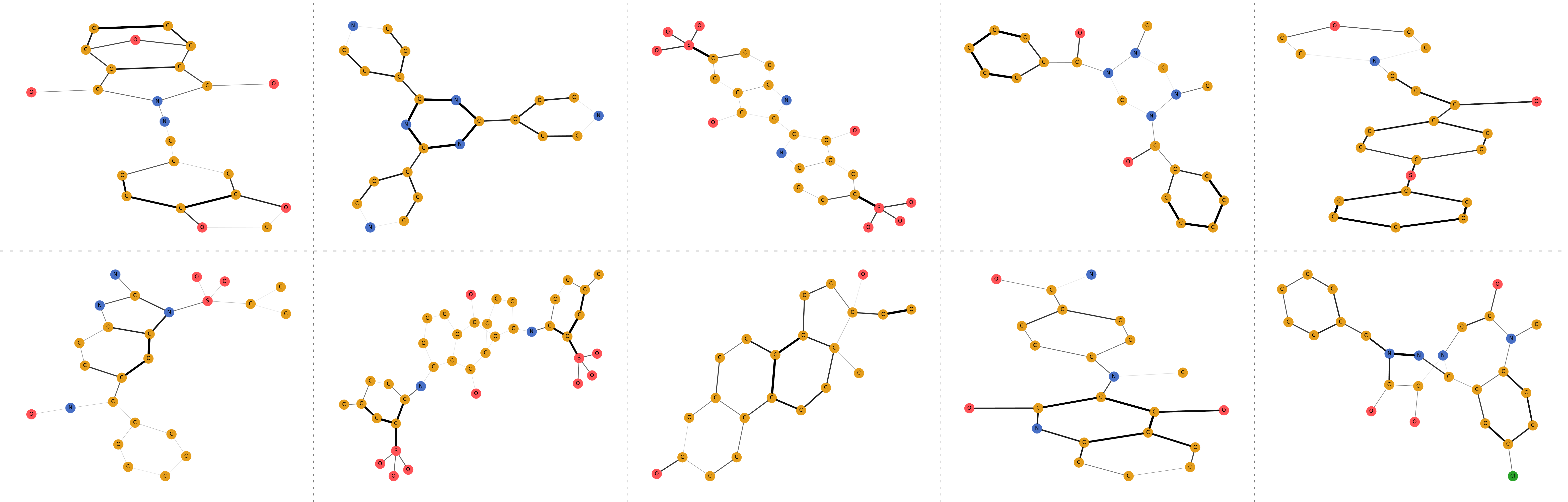}}
\caption{Visualizing label-relevant subgraphs discovered by \proj for OGBG-Molhiv. Each row represents a graph class.}
\end{center}
\end{figure}

\begin{figure}[t]
\begin{center}
\centerline{\includegraphics[width=1\columnwidth]{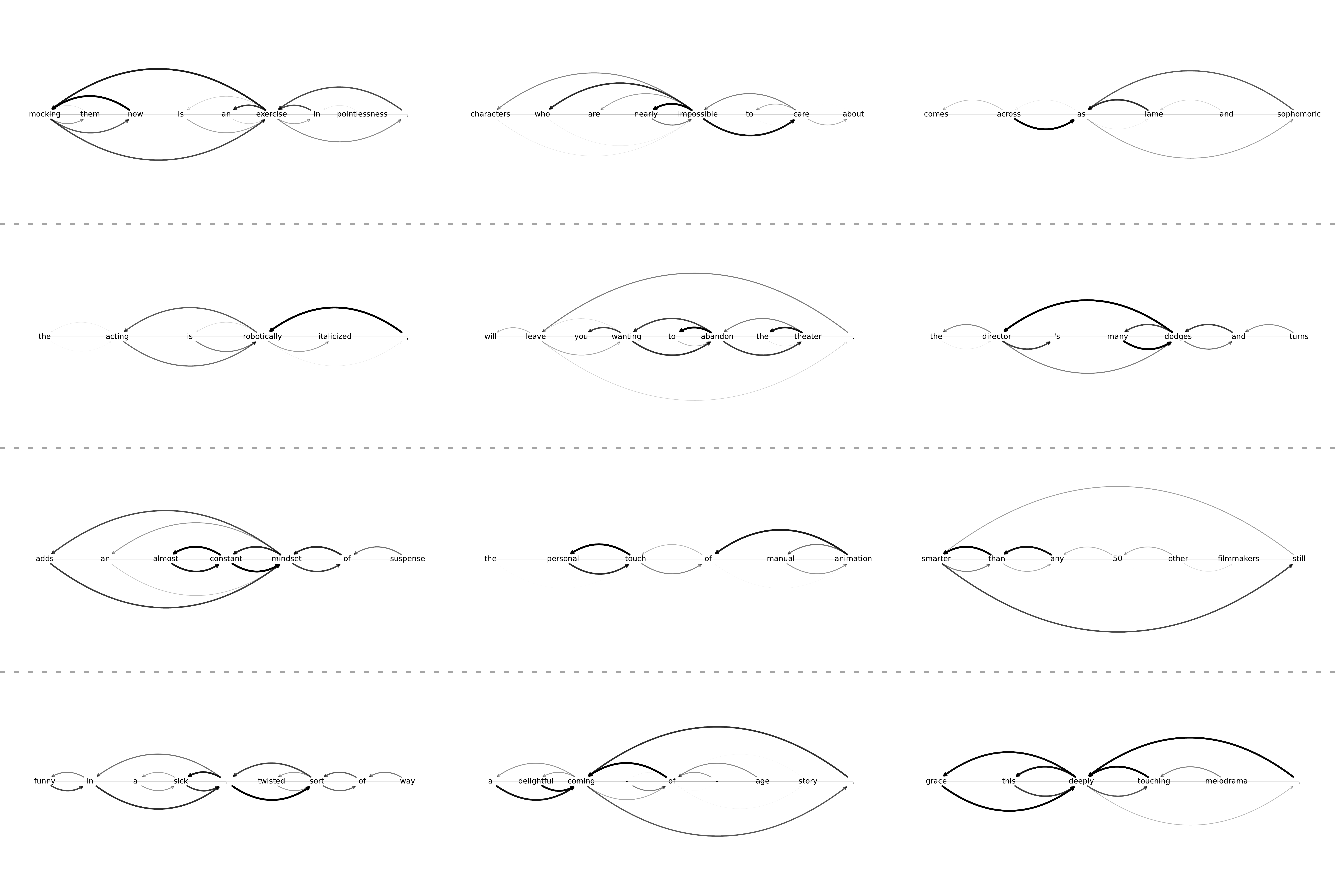}}
\caption{Visualizing label-relevant subgraphs discovered by \proj for Graph-SST2. The top two rows show sentences with negative sentiment, and the bottom two rows show sentences with positive sentiment.}
\end{center}
\end{figure}

\begin{figure}[t]
\begin{center}
\centerline{\includegraphics[width=1.0\columnwidth]{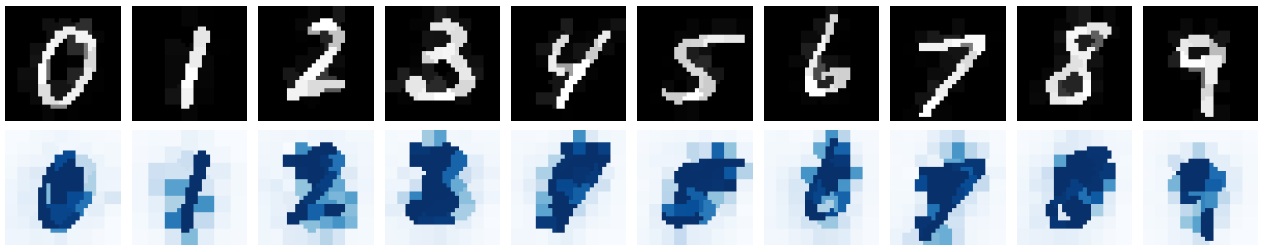}}
\caption{Visualizing label-relevant subgraphs discovered by \proj for MNIST-75sp. The first row shows the raw images and the second row shows the normalized attention weights learned by \proj.}
\label{viz:mnist}
\end{center}
\end{figure}


\end{document}